\title{Estimating Mixture Models via Mixtures of Polynomials}
\author{%
Sida I. Wang \ \ \ \
Arun Tejasvi Chaganty \ \ \ \
Percy Liang\ \ \ \  \\
Computer Science Department,
Stanford University\\%
\texttt{\{sidaw,chaganty,pliang\}@cs.stanford.edu}
}
\date{}
\newif\ifshowcomments
\newcommand\ac[1]{\textcolor{red}{[AC: #1]}}
\newcommand\pl[1]{\textcolor{red}{[PL: #1]}}
\newcommand\pled[1]{\textcolor{orange}{[PL(done): #1]}}
\newcommand\sidaw[1]{\textcolor{green}{[sidaw: #1]}}
\newcommand\ac[1]{}
\newcommand\pl[1]{}
\newcommand\pled[1]{}
\newcommand\sidaw[1]{}
\providecommand{\bigcdot}{\bullet}
\providecommand{\ind}[1]{_{#1}} 
\providecommand{\inds}[2]{_{#1,#2}} 
\providecommand{\byt}{{\by^*}} 
\providecommand{\balpha}{{\boldsymbol\alpha}} 
\providecommand{\bbeta}{{\boldsymbol\beta}} 
\providecommand{\bgamma}{{\boldsymbol\gamma}} 
\renewcommand{\Re}{\mathbb{R}} 
\providecommand{\NN}{\mathbb{N}} 
\providecommand{\Loy}{\mathscr{L}_{\by}} 
\providecommand{\Mea}{\mathcal{M}} 
\providecommand{\MM}{\mathbf{M}} 
\providecommand{\coeff}[2]{a_{#1#2}}
\providecommand{\sprm}{\theta} 
\providecommand{\sprmt}{\sprm^*} 
\providecommand{\sprmh}{\hat{\sprm}} 
\providecommand{\prm}{{\boldsymbol\sprm}} 
\providecommand{\prmt}{\prm^*} 
\providecommand{\prmh}{\hat{\prm}} 
\providecommand{\allprm}{[\prm_k]_{k=1}^K} 
\providecommand{\allprmt}{[\prmt_k]_{k=1}^K} 
\providecommand{\dat}{\bx} 
\providecommand{\sdat}{x} 
\providecommand{\obs}{\phi} 
\providecommand{\mdeg}{r} 
\providecommand{\degree}[1]{\left|#1\right|}
\providecommand{\smono}[1]{{\left[#1\right]}}
\providecommand{\bUbasis}{\bU\smono{\bbeta_{1}; \ldots; \bbeta_{K}}}
\providecommand{\bUbasisshift}[1]{\bU\smono{\bbeta_{1} + \bgamma_{#1}; \ldots; \bbeta_{K} + \bgamma_{#1}}}
\providecommand{\monos}{\bv} 
\providecommand{\cnstrfun}{g} 
\providecommand{\cnstrfuns}{\boldsymbol{g}} %
\providecommand{\cnstrfunname}{constraint function} %
\providecommand{\XX}{\boldsymbol{X}} 
\providecommand{\Multinomial}{\operatorname{Multinomial}}
\providecommand{\polymom}{Polymom} 
\providecommand{\GeMM}{Gen.MM} 
\providecommand{\tensordecomp}{tensor decomposition}
\providecommand{\tensorstruct}{special tensor structure}
\providecommand{\obsname}{observation function}
\providecommand{\feasibility}[2]{
  \begin{array}{cl}
    \textnormal{find} &{#1} \\
    \textnormal{s.t.}  &#2
  \end{array}
}
\providecommand{\minprob}[3]{
  \begin{array}{cl}
    \underset{#1}{\operatorname{minimize}}&{#2} \\
    \textnormal{s.t.}  &#3
  \end{array}
}
\providecommand{\tr}{\operatorname{tr}} 
\providecommand{\diag}{\operatorname{diag}}
\providecommand{\rank}{\operatorname{rank}}
\providecommand{\T}{\mathsf{T}} 
\providecommand{\vecs}{\operatorname{vecs}}
\providecommand{\colsp}{\operatorname{col}}
\providecommand{\inv}{{-1}}
\newcommand{\EE}{\mathbb{E}}
\newcommand{\indicator}[1]{\operatorname{\boldsymbol{1}}_{#1}}
\newcommand{\convergep}{\overset{p}{\to}}
\forcecommand{\vague}{\editupnote{vague}}
\forcecommand{\verify}{\editupnote{verify}}
\forcecommand{\reword}{\editupnote{reword}}
\forcecommand{\define}{\editupnote{define}}
\forcecommand{\expand}{\editupnote{expand}}
\forcecommand{\needcite}{\editupnote{citation needed}}
\declaretheoremstyle[%
  spaceabove=0em,%
  spacebelow=-10em,%
  headfont=\normalfont\itshape,%
  postheadspace=1em%
]{mystyle} 
\declaretheorem[name={Proof},style=mystyle,unnumbered,
]{prf}
\newtheorem{thm}{Theorem}[section]
\newtheorem{lem}[thm]{Lemma}
\newtheorem{prop}[thm]{Proposition}
\newtheorem{example}[thm]{Example}
\renewenvironment{abstract}
 {\small
  \begin{center}
  \bfseries \abstractname
  \end{center}
  \list{}{
    \setlength{\leftmargin}{1.5cm}%
    \setlength{\rightmargin}{\leftmargin}%
  }%
  \item\relax}
 {\endlist}
\begin{document}
\renewcommand\floatpagefraction{.7}
\renewcommand\textfraction{.2}

\maketitle

\begin{abstract}
\label{sec:abstract}
Mixture modeling is a general technique for making any simple model
more expressive through weighted combination.
This generality and simplicity in part explains the success of the Expectation Maximization (EM) algorithm,
in which updates are easy to derive for a wide class of mixture models.
However, the likelihood of a mixture model is non-convex, so EM has no known global convergence guarantees.
Recently, method of moments approaches offer global guarantees for some mixture models,
but they do not extend easily to the range of mixture models that exist.
In this work, we present Polymom, an unifying framework based on method of moments
in which estimation procedures are easily derivable, just as in EM.
Polymom is applicable when the moments of a single mixture component
are polynomials of the parameters.
Our key observation is that the moments of the mixture model are
a mixture of these polynomials,
which allows us to cast estimation as a Generalized Moment Problem.
We solve its relaxations using semidefinite optimization,
and then extract parameters using ideas from computer algebra.
This framework allows us to draw insights and apply tools from convex
optimization, computer algebra and the theory of moments to study
problems in statistical estimation.
Simulations show good empirical performance on several models.
\end{abstract}

\section{Introduction}
\label{sec:intro}

Mixture models play a central role in machine learning and statistics,
with diverse applications including bioinformatics, speech, natural language, and computer vision.
The idea of mixture modeling is to explain data through
a weighted combination of simple parametrized distributions \citep{titterington1985statistical,mclachlan2004finite}.
In practice, maximum likelihood estimation via Expectation Maximization (EM) has been the
workhorse for these models, as the parameter updates are often easily derivable.
However, EM is well-known to suffer from local optima.
The method of moments, 
dating back to Pearson \citep{pearson1894contributions} in 1894,
is enjoying a recent revival \citep{anandkumar12moments,anandkumar12lda,anandkumar13tensor,hsu12identifiability,hsu13spherical,chaganty13regression,kalai2010efficiently,hardt2014sharp,ge2015learning,balle2014spectral} due to
its strong global theoretical guarantees.
However, current methods depend strongly on the specific distributions 
and are not easily extensible to new ones. 

In this paper, we present a method of moments approach, which we call \polymom{},
for estimating a wider class of mixture models
in which the moment equations are polynomial equations (\sectionref{problem-formulation}).
Solving general polynomial equations is NP-hard, 
but our key insight is that for mixture models,
the moments equations are \emph{mixtures
  of polynomials equations} and we can hope to solve them if the moment equations
for each mixture component are \emph{simple} polynomials equations that we can solve.
\polymom{} proceeds as follows:
First, we recover mixtures of monomials of the parameters from
the data moments by solving an instance of the Generalized Moment
Problem (GMP) \citep{lasserre2011moments,lasserre2008semidefinite} (\sectionref{moment-completion}).
We show that for many mixture models, the GMP can be
solved with basic linear algebra and in the general case,
can be approximated by an SDP in which the moment equations are
linear constraints. 
Second, we extend multiplication matrix ideas from the computer algebra literature
\citep{stetter1993multivariate,moller1995multivariate,sturmfels2002solving,henrion2005detecting}
to extract the parameters by solving a generalized eigenvalue problem
(\sectionref{solution-extraction}).

\polymom{} improves on previous method of moments approaches in both generality and flexibility.
First, while tensor factorization has been the main driver for many of the method of moments
approaches for many types of mixture models,
\citep{anandkumar13tensor,anandkumar2013community,chaganty13regression,hsu13spherical,anandkumar2014provable,ge2015learning},
each model required specific adaptations which are non-trivial even for experts.
In contrast, \polymom{} provides a unified principle for tackling new models
that is as turnkey as computing gradients or EM updates.
To use \polymom{} (\figureref{diagram}), one only needs to provide a
list of observation functions ($\phi_n$) and
derive their expected values expressed symbolically as polynomials in the
parameters of the specified model ($f_n$).
\polymom{} then estimates expectations of $\phi_n$ and
outputs parameter estimates of the specified model.
Since \polymom{} works in an optimization framework,
we can easily incorporate constraints such as non-negativity and parameter tying
which is difficult to do in the tensor factorization paradigm.
In simulations, we compared \polymom{} with EM and tensor factorization
and found that \polymom{} performs similarly or better on some models
(\sectionref{models}). 
\newcommand{\catformat}[1]{\textbf{#1}}
\begin{center}
\begin{table}
\begin{tabularx}{\textwidth}{>{\centering\arraybackslash}X>{\centering\arraybackslash}X}
\begin{tabular}[t]{ll|}
\hhline{--}
\multicolumn{2}{|l|}{\catformat{mixture model}}\\ 
\hhline{--}
$\dat_t$  & data point ($\Re^D$) \\
$z_t$ & latent mixture component ($[K]$) \\
$\prm\ind{k}$ & parameters of component $k$ ($\Re^P$) \\
$\pi_k$  & mixing proportion of $p(z=k)$\\
$\allprm$  & all model parameters \\
\hhline{--}
\multicolumn{2}{|l|}{\catformat{moments of data}}\\ 
\hhline{--}
$\obs_n(\dat)$ & \obsname\ \\
$f_n(\prm)$ & \obsname\ \\
\hhline{--}
\multicolumn{2}{|l|}{\catformat{moments of parameters}}\\ 
\hhline{--}
$\Loy$ & the Riesz linear functional \\ 
$y_\balpha$ & $y_\balpha=\Loy(\prm^\balpha)$, $\balpha^\text{th}$
              moment\\
$\mu$ & probability measure for $\by$\\
$\by$ & $(y_\balpha)_\balpha$ the moment sequence\\
$\MM_{\mdeg}(\by)$ & moment matrix of degree $\mdeg$\\
\end{tabular}
\hspace{-7pt}
\begin{tabular}[t]{ll}
\hhline{--}
\multicolumn{2}{|l|}{\catformat{\catformat{sizes}}}\\ 
\hhline{--}
$D$& data dimensions \\
$K$& mixture components \\
$P$& parameters of mixture components \\
$T$& data points\\
$N$& constraints \\
$[N]$& $\{1, \dots, N\}$\\
$\mdeg$ & degree of the moment matrix\\
$s(\mdeg)$ & size of the degree $\mdeg$ moment matrix\\
\hhline{--}
\multicolumn{2}{|l|}{\catformat{polynomials}}\\ 
\hhline{--}
$\Re[\prm]$ & polynomial ring in variables $\prm$\\
$\NN$ & set of non-negative integers\\
$\balpha, \bbeta, \bgamma$ & vector of exponents (in $\NN^P$ or $\NN^D$) \\
$\prm^\balpha$ & monomial $\prod_{p=1}^P \sprm_p^{\balpha_p}$\\
$\coeff{n}{\balpha}$ & coefficient of $\prm^\balpha$ in $f_n(\prm)$\\
\end{tabular}
\end{tabularx}
\caption{
  \textbf{Notation}: 
We use lowercase letters (e.g., $d$) for indexing,
and the corresponding uppercase letter to denote the upper limit
(e.g., $D$, in ``sizes'').
We use lowercase letters (e.g., $\sprm\inds{k}{p}$) for scalars,
lowercase bold letters (e.g., $\prm$) for vectors,
and bold capital letters (e.g., $\bM$) for matrices.
}
\end{table}

\begin{figure}
  \includegraphics[width=\textwidth]{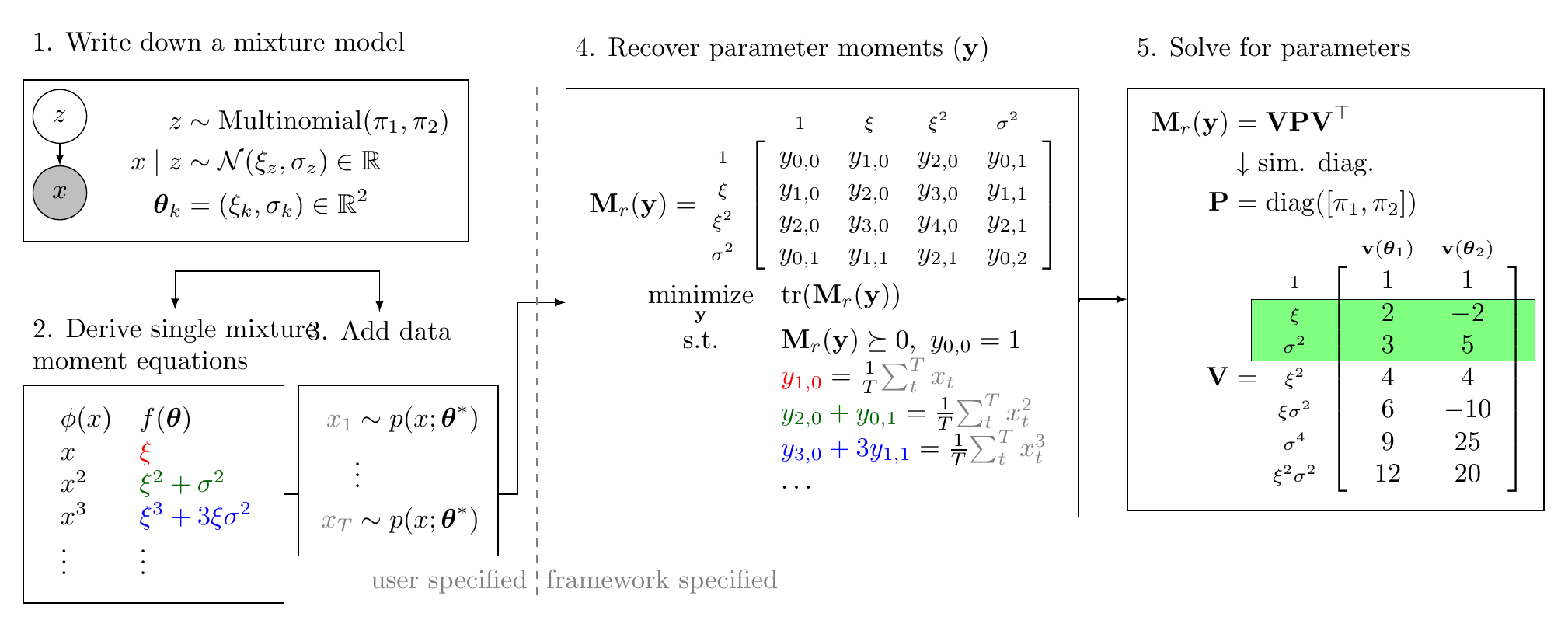}
  \caption{An overview of applying the \polymom{} framework.}
  \label{fig:diagram}
\end{figure}

\end{center}

\section{Problem formulation}
\label{sec:problem-formulation}


\subsection{The method of moments estimator}
\label{sec:statmom}

In a mixture model, each data point $\bx \in \Re^D$
is associated with a latent component $z \in [K]$:
\begin{align}
z \sim \text{Multinomial}(\pi), \quad \dat \ |\  z\sim p(\dat; \prmt\ind{z}),
\end{align}
where $\pi = (\pi_1, \dots, \pi_K)$ are the mixing coefficients,
$\prmt\ind{k} \in \Re^P$ are the true model parameters for the $k^{\textrm{th}}$ \emph{mixture component},
and $\dat \in \Re^D$ is the random variable representing data.
We restrict our attention to mixtures where each component distribution comes from the same parameterized family.
For example, for a mixture of Gaussians,
$\prmt\ind{k} = (\xi\ind{k}^* \in \Re^D, \Sigma\ind{k}^* \in \Re^{D \times D})$
consists of the mean and covariance of component $k$.

We define $N$ \emph{observation functions} $\obs_n : \Re^D \to \Re$
for $n \in [N]$ and define $f_n(\prm)$ to be the expectation of $\obs_n$ over a single component with parameters $\prm$,
which we assume is a simple polynomial:
\begin{align}
\label{eqn:momentFunction}
f_n(\prm) \eqdef \EE_{\dat \sim p(\dat; \prm)}[\obs_n(\dat)] = \sum_\balpha \coeff{n}{\balpha} \prm^\balpha,
\end{align}
where $\prm^\balpha = \prod_{p=1}^P \theta_p^{\alpha_p}$.
The expectation of each observation function $\EE[\obs_n(\dat)]$
can then be expressed as a mixture of polynomials of the true parameters 
\begin{align}
\label{eqn:momentExpectations}
\EE[\obs_n(\dat)]
=\sum_{k=1}^K \pi_k\EE[\obs_n(\dat) | z=k]
= \sum_{k=1}^K \pi_k f_n(\prmt\ind{k}).
\end{align}

The method of moments for mixtures seeks parameters $\allprm$ that satisfy the
\emph{moment conditions} expressed as the following polynomial equations:
\begin{align}
\label{eqn:momentEquations}
\EE[\obs_n(\dat)] = \sum_{k=1}^K \pi_k
f_n(\prm\ind{k}).
\end{align}%
where $\EE[\obs_n(\dat)]$ can be estimated from the data: $\frac1{T}\sum_{t=1}^T \obs_n(\dat_t)
\convergep \EE[\obs_n(\dat)]$.
Clearly, the true parameters $\allprmt$
satisfy these conditions as in \eqref{eqn:momentExpectations}. 
The goal of this work is to find parameters satisfying moment conditions
that can be written in the mixture of polynomial form
\eqref{eqn:momentEquations}.
We assume that the $N$ given observations functions
$\obs_1, \dots, \obs_N$ uniquely identify the model parameters
(up to permutation of the components).



\begin{example}[1-dimensional Gaussian mixture]
  \label{exa:mog}
Consider a $K$-mixture of 1D Gaussians with parameters $\prm\ind{k} = [\xi_k, \sigma_k^2]$
corresponding to the mean and variance, respectively, of the $k$-th component (\figureref{diagram}: steps 1 and 2).
We choose the observation functions,
$\obs(\sdat) = [\sdat^1, \ldots, \sdat^6],$
which have corresponding moment polynomials,
\[f(\prm) = [\xi, \xi^2 + \sigma^2, \xi^3 + 3 \xi \sigma^2, \dots].\]
For example, instantiating \eqref{eqn:momentEquations},
$\E[x^2] = \sum_{k=1}^K \pi_k (\xi_k^2 + \sigma^2_k)$.
Given $\obs(\dat)$ and $f(\prmt)$, and data, the \polymom{} framework can recover the parameters.
Note that the 6 moments we use have been shown by \citet{pearson1894contributions} to be sufficient for a mixture of two Gaussians.


\end{example}


\begin{example}[Mixture of linear regressions]
  \label{exa:mlr}
\providecommand{\mlrw}{w} 

Consider a mixture of linear regressions \citep{viele2002regression,chaganty13regression},
where each data point $\dat = [x, y]$ is drawn from component $k$ by sampling $x$ from an unknown distribution \emph{independent} of $k$ and setting $y = w_k x + \epsilon$, where $\epsilon \sim \sN(0, \sigma_k^2)$.
The parameters $\prm\ind{k} = (w_k, \sigma^2_k)$ are the slope and noise variance for each component $k$.
Let us take our observation functions to be \[\obs(\dat) = [x, xy,
xy^2, x^2, \ldots, x^3 y^2],\] for which the moment polynomials are 
\[f(\prm) = [\E[x], \E[x^2] w, \E[x^3] w^2 + \E[x] \sigma^2, \E[x^2], \ldots].\]
\end{example}

In \exampleref{mog}, the coefficients $\coeff{n}{\balpha}$ in the
polynomial $f_n(\prm)$ are just constants determined by integration.
For the conditional model in \exampleref{mlr}, the coefficients depends on the
data. While \exampleref{mlr} works, \polymom{} cannot handle arbitrary data
dependence, see \appendixref{separability} for sufficient conditions and counterexamples.

\subsection{Solving the moment conditions}
Our goal is to recover model parameters
$\prmt\ind{1}, \ldots,\prmt\ind{K} \in \Re^P$
for each of the $K$ components of the mixture
model that generated the data
as well as their respective mixing proportions $\pi_1, \ldots, \pi_K \in \Re$. 
To start, let's ignore sampling noise and identifiability issues and
suppose that we are given exact moment conditions as defined in \eqref{eqn:momentEquations}.
Each condition $f_n \in \Re[\prm]$ is a polynomial of the parameters $\prm$,
for $n = 1, \dots, N$.


\equationref{momentEquations} is a polynomial system of $N$ equations in the
$K+K\times P$ variables $[\pi_1, \ldots, \pi_K]$ and
$[\prm\ind{1}, \ldots, \prm\ind{K}] \in \Re^{P \times K}$.
It is natural to ask if standard
polynomial solving methods can solve \eqref{eqn:momentEquations} in the case
where each $f_n(\prm)$ is simple.
Unfortunately, the complexity of general polynomial equation solving is lower bounded by
the number of solutions, and each of the $K!$ permutations of the
mixture components corresponds to a distinct solution of
\eqref{eqn:momentEquations} under this polynomial system representation.
While several methods can take advantage of symmetries in polynomial
systems \citep{sturmfels2008algorithms, corless2009symmetries},
they still cannot be adapted to tractably solve
\eqref{eqn:momentEquations} to the best of our knowledge.



The key idea of \polymom{} is to exploit the mixture representation of the moment equations
\eqref{eqn:momentEquations}.
One idea is to seek a equivalent representation of the moment conditions expressed as
polynomial equations \eqref{eqn:momentEquations} that is invariant to permutations
of the $K$ components.
Specifically, let $\mu^*$ be a particular ``mixture'' 
over
the component parameters $\prmt\ind{1}, \dots, \prmt\ind{k}$
(i.e. $\mu^*$ is a probability measure).
Then we can express the moment conditions \eqref{eqn:momentEquations} in terms of $\mu^*$:
\begin{align}
\label{eqn:polytomeas}
\EE[\phi_n(\dat)] = \int f_n(\prm)\ \mu^*(d\prm), 
\text{ where }
\mu^*(\prm) = \sum_{k=1}^K \pi_k \delta(\prm-\prmt\ind{k}).
\end{align}
Conceptually, we no longer have any permutation invariance because the variable is $\mu$.
While permuted solutions of \eqref{eqn:momentEquations} are not equal
to each other in the parameter space,
$\mu$ remains the same measure regardless of the ``order in summing delta functions''.
As a result, solving the original moment conditions \eqref{eqn:momentEquations}
is equivalent to solving the following feasibility problem over $\mu$,
but where we deliberately ``forget'' the permutation of the
components by using $\mu$ to represent the problem:
\begin{align}
\label{prob:gmp}
\feasibility {\mu \in \Mea_+(\Re^P), \text{ the set of probability
  measures over } \Re^P} 
{\int f_n(\prm)\ \mu(d\prm) = \EE[\obs_n(\dat)], \ \ n=1,\ldots, N\\
&\mu\ \textrm{is}\ K\textrm{-atomic} \text{ (i.e. sum of $K$ deltas)}.}
\end{align}
If the true model parameters $\allprmt$ can be identified by the $N$ observed
moments up to permutation, then the measure
$\mu^*(\prm) = \sum_{k=1}^K \pi_k \delta(\prm-\prmt\ind{k})$
solving \problemref{gmp} is also unique.

\polymom{} solves \problemref{gmp} in two steps:
\begin{enumerate}
  \item Moment completion (\sectionref{moment-completion}):
    We show that \problemref{gmp} over the measure $\mu$
    can be relaxed to an SDP over a certain (parameter) \emph{moment matrix}
    $\MM_\mdeg(\by)$ whose optimal solution 
    is $\MM_\mdeg(\by^*) = \sum_{k=1}^K \pi_k \bv_\mdeg(\prmt\ind{k}) \bv_\mdeg(\prmt\ind{k})^\top$,
    where $\bv_\mdeg(\prmt\ind{k})$ is the vector of all monomials of degree at most $r$.
  \item Solution extraction (\sectionref{solution-extraction}):
    We then take $\MM_\mdeg(\by)$
    and construct a series of generalized eigendecomposition problems,
    whose eigenvalues yield $\allprmt$.
\end{enumerate}

\paragraph{Remark.} From this point on, distributions and moments refer
to $\mu^*$ which is over \emph{parameters}, not over the data.
All the structure about the data is captured in the moment conditions \eqref{eqn:momentEquations}.

\section{Moment completion}
\label{sec:moment-completion}

The first step is to reformulate \problemref{gmp}
as an instance of the Generalized Moment Problem (GMP) introduced by
\cite{lasserre2008semidefinite}.
A reference on the GMP, algorithms for solving GMPs,
and its various extensions is \cite{lasserre2011moments}.
We start by observing that \problemref{gmp} only depends on the integrals of monomials under the measure $\mu$:
for example, if $f_n(\prm) = 2 \sprm_1^3 - \sprm_1^2 \sprm_2$,
then we only need to know the integrals over the constituent monomials
($y_{3,0} \eqdef \int \sprm_1^3 \mu(d\prm)$ and
$y_{2,1} \eqdef \int \sprm_1^2 \sprm_2 \mu(d\prm)$)
in order to evaluate the integral over $f_n$.
This suggests that we can optimize over the (parameter) \emph{moment sequence}
$\by = (y_\balpha)_{\balpha \in \NN^P}$,
rather than the measure $\mu$ itself.
We say that the moment sequence $\by$ has a \emph{representing measure} $\mu$
if $y_\balpha = \int \prm^{\balpha} \ \mu(d\prm)$ for all $\balpha$,
but we do not assume that such a $\mu$ exists.
The \emph{Riesz linear functional} $\Loy : \Re[\prm] \rightarrow \Re$
is defined to be the linear map such that
$\Loy(\prm^\balpha) \eqdef y_\balpha$ and $\Loy(1) = 1$.
For example, 
$\Loy(2 \sprm_1^3 - \sprm_1^2 \sprm_2 + 3) = 2 y_{3,0} - y_{2,1} + 3$.
If $\by$ has a representing measure $\mu$,
then $\Loy$ simply maps polynomials $f$ to integrals of $f$ against $\mu$.
The key idea of the GMP approach is to 
convexify the problem by treating $\by$ as
free variables and then introduce constraints to guarantee that $\by$ has a representing measure.
First, let $\bv_\mdeg(\prm) \eqdef \left[ \prm^\balpha\ : \degree{\balpha} \leq \mdeg \right] \in \Re[\prm]^{s(\mdeg)}$
be the vector of all $s(\mdeg)$ monomials of degree no greater than $\mdeg$. 
Then, define the \emph{truncated moment matrix}
as 
\[\MM_\mdeg(\by) \eqdef \Loy(\bv_\mdeg(\prm) \bv_\mdeg(\prm)^\T),\]
where the linear
functional $\Loy$ is applied elementwise (see \exampleref{moment-matrix} below).
If $\by$ has a representing measure $\mu$, then $\MM_\mdeg(\by)$ is simply a (positive) integral over rank $1$ matrices
$\bv_\mdeg(\prm) \bv_\mdeg(\prm)^\T$ with respect to $\mu$,
so necessarily $\MM_\mdeg(\by) \succeq 0$ holds. 
Furthermore, by \theoremref{flat} \cite{curto1996solution},
for $\by$ to have a $K$-atomic representing measure,
it is sufficient that $\rank(\MM_\mdeg(\by))
=\rank(\MM_{\mdeg-1}(\by)) = K$.
So \problemref{gmp} is equivalent to
\begin{align}
\label{prob:gmpsdprank}
\feasibility {\by \in \Re^\NN \quad (\textrm{or equivalently, find}\ \MM(\by))}  {\sum_{\balpha} \coeff{n}{\balpha}
 y_\balpha = \EE[\phi_n(\dat)], \ \ n=1,\ldots, N\\
& \MM_\mdeg (\by) \succeq 0,\ y_{\boldsymbol{0}} = 1\\
& \rank(\MM_\mdeg(\by)) = K \text{ and } \rank(\MM_{\mdeg-1}(\by)) = K.}
\end{align}

Unfortunately, the rank constraints in \problemref{gmpsdprank} are not tractable.
We use the following relaxation to obtain our final (convex) optimization problem
\begin{align}
\label{prob:gmpsdp}
\minprob{\by}{\tr(\bC \MM_{\mdeg}(\by))}
{\sum_{\balpha} \coeff{n}{\balpha}
 y_\balpha = \EE[\phi_n(\dat)], \ \ n=1,\ldots, N\\
& \MM_\mdeg (\by) \succeq 0,\ y_{\boldsymbol{0}} = 1}
\end{align}
where $\bC \succ 0$ is a chosen scaling matrix. 
A common choice is $\bC = \bI_{s(r)}$ corresponding to minimizing the nuclear norm of the moment matrix, the usual convex relaxation for rank.
\appendixref{theory-moment-completion}
discusses some other choices of $\bC$ and more theory on \problemref{gmpsdp}.
However, for special cases like three-view mixture models, mixture of
linear regressions, etc. 
\problemref{gmpsdprank} can also be solved with basic linear
algebra, and there is no need to solve \problemref{gmpsdp} (see \sectionref{models}). 



\begin{example}[moment matrix for a 1-dimensional Gaussian mixture]
  \label{exa:moment-matrix}
Recall that the parameters $\prm = [\xi, \sigma^2]$ are the mean and variance of a one dimensional Gaussian.
Let us choose the
monomials $\bv_2(\prm) = [1, \xi, \xi^2, \sigma^2]$.
Step 4 for \figureref{diagram} shows the moment matrix when using $\mdeg
= 2$.
The moment matrix for $\mdeg = 2$ is then:
\begin{align}
\MM_{\mdeg=2}(\by) = 
\kbordermatrix{&
       1       &     \xi     & \xi^2   & \sigma^2 & \xi^3  &  \xi c \\
       1& y_{0,0} & y_{1,0} & y_{2,0} & y_{0,1} & y_{3,0} & y_{1,1}\\
\xi&       y_{1,0} & y_{2,0} & y_{3,0} & y_{1,1} & y_{4,0} & y_{2,1}\\
\xi^2 &    y_{2,0} & y_{3,0} & y_{4,0} & y_{2,1} & y_{5,0} & y_{3,1}\\
c     &    y_{0,1} & y_{1,1} & y_{2,1} & y_{0,2} & y_{3,1} & y_{1,2}\\
\xi^3 &    y_{3,0} & y_{4,0} & y_{5,0} & y_{3,1} & y_{6,0} & y_{4,1}\\
\xi c &    y_{1,1} & y_{2,1} & y_{3,1} & y_{1,2} & y_{4,1} & y_{2,2}
}
\end{align}
Each row and column of the moment matrix is labeled with a monomial
and entry $(i,j)$ is subscripted by the product of the monomials in
row $i$ and column $j$.
For $\obs_2(\sdat) = \sdat^2$, we have $f_2(\prm)=\xi^2 + c$, which leads to the linear constraint $y_{2,0} + y_{0,1} - \EE[\sdat^2]= 0$.
For $\obs_3(\sdat) = \sdat^3$, $f_3(\prm)=\xi^3 + 3\xi c$, leading to the constraint $y_{3,0} + 3 y_{1,1} - \EE[\sdat^3]= 0$.
\end{example}

\paragraph{Related work.}
Readers familiar with the sum of squares
and polynomial optimization literature~\citep{lasserre2001global, laurent2009sums, parrilo2003minimizing, parrilo2003semidefinite}
will note that \problemref{gmpsdp} is similar to the SDP relaxation of a
polynomial optimization problem. 
However, in typical polynomial optimization,
we are only interested in solutions $\prmt$ that actually
satisfy the given constraints,
whereas here we are interested in $K$ solutions $\allprmt$,
whose \emph{mixture} satisfies constraints corresponding to the moment
conditions \eqref{eqn:momentEquations}.
Within machine learning, 
generalized PCA has been formulated as a moment problem
\citep{ozay2010gpca} and the Hankel matrix (basically the moment matrix) has been used to learn
weighted automata \citep{balle2014spectral}.
While similar tools are used, the conceptual approach and the
problems considered are different.
For example, the moment matrix of this paper consists of unknown
moments of the model parameters, whereas
exisiting works considered moments of the data that are always directly observable.

\paragraph{Constraints.}
Constraints such as non-negativity (for parameters which represent probabilities or variances)
and parameter tying \citep{koller2009probabilistic} are
quite common in graphical models and are not easily addressed with existing method of moments approaches.
The GMP framework allows us to
incorporate some constraints using localizing matrices
\citep{curto2000truncated}.
Consider the case of a 2D mixture of Gaussians where the
mean parameters $\xi_1$, $\xi_2$ lies on the parabola $\xi_1 - \xi_2^2 =
0$ for all components. 
In this case, we just need to add constraints to \problemref{gmpsdp}:
$y_{(1,0)+\bbeta} - y_{(0,2)+\bbeta} = 0$ for all $\bbeta \in \NN^2$
up to degree $\degree{\bbeta} \leq 2 \mdeg-2$. 
Thus, we can handle
constraints during the estimation procedure rather than projecting back
onto the constraint set as a post-processing step. 
This is necessary for
models that only become identifiable by the observed moments
after constraints are taken into account. 
By incorporating these constraints into parameter estimation, we can possibly identify the model parameters with fewer moments.
We describe this method and its learning
implications in \appendixref{constraints}.


\paragraph{Guarantees and statistical efficiency.}
In some circumstances, e.g. in three-view mixture models or the
mixture of linear regressions, the constraints fully determine the
moment matrix; we consider these cases in \sectionref{models} and \appendixref{applications-details}.
While there are no general guarantee on \problemref{gmpsdp}, the flat
extension theorem tells us when the moment matrix corresponds to a
unique solution (more discussions in \appendixref{theory-moment-completion}):

\begin{theorem}[Flat extension theorem \cite{curto1996solution}]
  \label{thm:flat}
  Let $\by$ be the solution to \problemref{gmpsdp} for a particular $\mdeg$. If $\MM_{\mdeg }(\by) \succeq 0$ and $\rank(\MM_{\mdeg - 1}(\by)) = \rank(\MM_{\mdeg}(\by))$ then $\by$ is the optimal solution to \problemref{gmpsdprank} for $K = \rank(\MM_{\mdeg}(\by))$ and there exists a unique $K$-atomic supporting measure $\mu$ of $\MM_{\mdeg}(\by)$.
\end{theorem}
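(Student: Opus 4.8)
The plan is to show that the flatness condition $\rank(\MM_{\mdeg-1}(\by)) = \rank(\MM_\mdeg(\by)) = K$ forces the truncated sequence $\by$ to be the moments (up to degree $2\mdeg$) of a unique $K$-atomic measure; this delivers both the existence and uniqueness of $\mu$ and the optimality of $\by$ for \problemref{gmpsdprank}, since that problem merely appends to \problemref{gmpsdp} the rank constraints that $\by$ now satisfies. I would organize the argument around the ideal $I \subseteq \Re[\prm]$ of polynomials $p$ whose coefficient vectors lie in $\nullsp(\MM_\mdeg(\by))$: positive semidefiniteness guarantees $\Loy(p^2) \ge 0$, so this kernel behaves like the vanishing ideal of a real variety, and the rank condition says $\dim \Re[\prm]/I = K$.

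First I would establish the flat extension itself. Writing $\MM_\mdeg(\by)$ in block form with principal block $\MM_{\mdeg-1}(\by)$, border block $B$, and corner $C$, flatness is equivalent to the Schur-complement identity $C = B^\T \MM_{\mdeg-1}(\by)^\mpinv B$ together with $\colsp(B) \subseteq \colsp(\MM_{\mdeg-1}(\by))$. This yields a reduction matrix $W$ with $B = \MM_{\mdeg-1}(\by) W$ that expresses every degree-$\mdeg$ monomial, modulo $\nullsp(\MM_\mdeg(\by))$, as a combination of lower-degree monomials. The core step is to iterate this: I would show that $W$ determines a unique choice of the new moments of degrees $2\mdeg+1$ and $2\mdeg+2$ making $\MM_{\mdeg+1}(\by)$ again flat and positive semidefinite, and then induct to obtain an infinite positive semidefinite moment matrix $\MM_\infty$ of rank exactly $K$.

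The main obstacle is precisely this extension step: one must verify that the extended array is a genuine moment (Hankel-type) matrix, i.e. that entries indexed by the same monomial agree after reduction by $W$. I would handle this by introducing the multiplication-by-$\sprm_p$ operators $\mathbf{X}_p$ on the $K$-dimensional quotient $\Re[\prm]/I$ (well-defined by flatness), and proving that they pairwise commute; commutativity is exactly the consistency needed for $W$ to generate a coherent infinite extension, and it follows from the symmetry of $\MM_\mdeg(\by)$ together with the equality of its Hankel entries. This is the delicate part and the heart of the Curto--Fialkow argument.

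Finally I would extract the measure. A positive semidefinite infinite moment matrix of finite rank $K$ admits a $K$-atomic representing measure; concretely, simultaneous diagonalization of the commuting $\mathbf{X}_1,\dots,\mathbf{X}_P$ produces $K$ common eigenvectors whose eigenvalue tuples are the atoms $\prmt\ind{1},\dots,\prmt\ind{K} \in V(I)$, and positive semidefiniteness forces these atoms to be real with strictly positive weights $\pi_k$. Uniqueness follows because any representing measure must be supported on $V(I) = \{\prmt\ind{1},\dots,\prmt\ind{K}\}$, and a $K$-atomic measure on $K$ prescribed points with a rank-$K$ moment matrix has its weights pinned down by a nonsingular Vandermonde-type linear system. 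Feasibility of $\by$ for \problemref{gmpsdprank} is then immediate, establishing optimality.
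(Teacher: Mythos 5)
The paper does not actually prove this statement: it is imported verbatim from Curto and Fialkow \cite{curto1996solution}, and both here and in the appendix (\autoref{thm:flatext}) it appears only as a cited black box. So there is no in-paper argument to compare against; what you have written is a reconstruction of the literature proof, and as an outline it is faithful to the standard Curto--Fialkow/Laurent route. You correctly reduce flatness to the Schur-complement characterization $C = B^\T \MM_{\mdeg-1}(\by)^{\mpinv} B$ with $\colsp(B) \subseteq \colsp(\MM_{\mdeg-1}(\by))$, identify the kernel of $\MM_\mdeg(\by)$ with an ideal $I$ of codimension $K$, build commuting multiplication operators on $\Re[\prm]/I$, extract the atoms by simultaneous diagonalization, and get uniqueness from the support condition $\operatorname{supp}(\mu) \subseteq V(I)$ plus a nonsingular linear system for the weights; the last step also makes feasibility for \problemref{gmpsdprank} (which is a feasibility problem, so ``optimal'' means ``feasible'') immediate. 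Two caveats. First, the two steps you flag as delicate really are where all the work lives and your sketch does not discharge them: (i) that the flat PSD matrix extends, uniquely and flatly, to $\MM_{\mdeg+1}$ and hence to an infinite rank-$K$ moment matrix requires showing the kernel is ``recursively generated'' (closed under multiplication by the variables up to the truncation degree), which is a lemma in its own right; (ii) commutativity and self-adjointness of the $\mathbf{X}_p$ with respect to the inner product induced by $\MM$ is what gives \emph{real} joint eigenvalues, not positive semidefiniteness alone as your last paragraph suggests --- PSD-ness gives you the positive weights. Second, in the uniqueness step you should note that the containment $\operatorname{supp}(\mu) \subseteq V(I)$ for an arbitrary representing measure only uses kernel polynomials $p$ with $\deg p \le \mdeg$ (so that $\int p^2\, d\mu = \Loy(p^2) = 0$ is a legitimate truncated-moment identity); with that restriction made explicit the argument closes. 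None of this is a wrong turn --- it is the right proof --- but be aware you are sketching a nontrivial external theorem rather than something the paper establishes.
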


Recovering $\MM_{\mdeg}(\by)$ is linearly dependent on small
perturbations of the input~\cite{freund2004sensitivity}, suggesting
that the method has polynomial sample complexity for most models
where the moments concentrate at a polynomially rate.
In \appendixref{extensions},
we discuss a few other important considerations like noise robustness,
making \problemref{gmpsdp} more statistically efficient,
and some open problems.

\section{Solution extraction}
\label{sec:solution-extraction}

Having completed the (parameter) moment matrix $\MM_\mdeg(\by)$ (\sectionref{moment-completion}),
we now turn to the problem of extracting the model parameters $\allprmt$.
The solution extraction method we present is based
on ideas from solving multivariate polynomial systems
where the solutions are eigenvalues of certain multiplication
matrices
\citep{stetter1993multivariate,moller1995multivariate,corless1995singular,stetter2004numerical}.\footnote{
\citet{dreesen2012roots} is a short overview and
\citet{stetter2004numerical} is a comprehensive treatment including
numerical issues.}
The main advantage of the
solution extraction view is that higher-order moments and structure in
parameters are handled in the framework without model-specific effort.

Recall that the true moment matrix is
\[\MM_{\mdeg}(\byt) = \sum_{k=1}^K \pi_k \bv(\prmt\ind{k}){\bv(\prmt\ind{k})}^\T,\]
where $\monos(\prm) \eqdef [\prm^{\balpha_1}, \ldots, \prm^{\balpha_{s(\mdeg)}}] \in \Re{[\prm]}^{s(\mdeg)}$
contains all the monomials up to degree $r$.
We use $\prm = [\sprm_1, \ldots, \sprm_P]$ for variables and $\allprmt$ for the true solutions to these variables (note the boldface).
For example, $\sprmt\inds{k}{p} \eqdef {(\prmt\ind{k})}_{p}$ denotes the $p^{\text{th}}$ value
of the $k^{\text{th}}$ component, which corresponds to a solution for the variable $\sprm_p$.
Typically, $s(\mdeg) \gg K, P$ and
the elements of $\monos(\prm)$ are arranged in a degree ordering so that
$||\balpha_i||_1 \leq ||\balpha_j||_1$ for $i \leq j$.
We can also write $\MM_{\mdeg}(\byt) = \bV \bP \bV^\top$,
where $\bV \eqdef [\bv(\prmt\ind{1}), \ldots, \bv(\prmt\ind{K})] \in \Re^{s(\mdeg) \times K}$
is the \emph{canonical basis}
and $\bP \eqdef \diag(\pi_1, \dots, \pi_K)$ contains the mixing proportions.
At the high level, we want to factorize $\MM_r(\by^*)$ to get $\bV$, however
we cannot simply eigen-decompose $\MM_{\mdeg}(\byt)$ since $\bV$ is not orthogonal.
To overcome this challenge, we will exploit the internal structure of $\bV$ to construct several other matrices that share the same factors and perform simultaneous diagonalization.

Specifically, let $\bV[\bbeta_1; \dots; \bbeta_K] \in \Re^{K \times K}$ be a sub-matrix of $\bV$
with only the rows corresponding to monomials with exponents
$\bbeta_{1}, \ldots, \bbeta_{K} \in \NN^P$. Typically, $\bbeta_{1}, \ldots, \bbeta_{K}$ are just the first $K$ monomials in $\bv$.
Now consider the exponent $\bgamma_p \in \NN^P$ which is $1$ in position $p$ and $0$ elsewhere, 
corresponding to the monomial $\prm^{\bgamma_p} = \sprm_p$.
The key property of the canonical basis is that
multiplying each column $k$ by a monomial $\sprmt\inds{k}{p}$ just performs a ``shift''
to another set of rows:
\begin{align}
  \label{eqn:Vshift}
\bV\smono{\bbeta_{1}; \ldots; \bbeta_{K}} \, \bD_p =
\bV\smono{\bbeta_{1} + \bgamma_p; \ldots; \bbeta_{K} + \bgamma_p},
\quad\text{ where }
\bD_p \eqdef \diag(\sprmt\inds{1}{p}, \ldots, \sprmt\inds{K}{p}).
\end{align}
Note that $\bD_p$ contains the $p^{\text{th}}$ parameter for all $K$ mixture
components.



\begin{example}[Shifting the canonical basis]
\label{exa:cannonicalbasis}
Let $\prm = [\sprm_1, \sprm_2]$ and the true solutions be $\prmt\ind{1} = [2,3]$
and $\prmt\ind{2} = [-2,5]$.
To extract the solution for $\sprm_1$ (which are $(\sprmt_{1,1}, \sprmt_{2,1})$),
let $\bbeta_1 = (1,0), \bbeta_2 = (1,1)$, and $\bgamma_1 = (1,0)$.
\begin{align}
\bV = 
\kbordermatrix{
& \bv(\prm\ind{1})    &      \bv(\prm\ind{2})  \\
1 &      1       & 1 \\
\sprm_1&    2 & -2 \\
\sprm_2&    3 & 5 \\
\sprm_1^2&  4 & 4 \\
\sprm_1\sprm_2& 6 & -10\\
\sprm_2^2& 9 & 25 \\
\sprm_1^2 \sprm_2& 12 & 20\\
}
\quad
\quad
\quad
\underbrace{\kbordermatrix{
& \monos_1    &   \monos_2  \\
\sprm_1&    2 & -2 \\
\sprm_1\sprm_2& 6 & -10
}}_{\bV\smono{\bbeta_1; \bbeta_2}}
\underbrace{\begin{bmatrix}
2 & 0\\
0 & -2
\end{bmatrix}}_{\diag(\sprm\inds{1}{1}, \sprm\inds{2}{1})}
= \underbrace{\kbordermatrix{
& \monos_1    &   \monos_2  \\
\sprm_1^2&  4 & 4 \\
\sprm_1^2 \sprm_2& 12 & 20
}}_{\bV\smono{\bbeta_1 + \bgamma_1; \bbeta_2 + \bgamma_1}}
\end{align}
\end{example}

While \eqref{eqn:Vshift} reveals the structure of $\bV$,
we don't know $\bV$.
However, we recover its column space $\bU \in \Re^{s(\mdeg) \times K}$ from the moment matrix $\MM_{\mdeg}(y^*)$, for example with an SVD\@.
Thus, we can relate $\bU$ and $\bV$ by a linear transformation:
$\bV = \bU \bQ$, where $\bQ \in \Re^{K \times K}$ is some unknown invertible matrix.
\eqref{eqn:Vshift} can now be rewritten as:
\begin{align}
  \label{eqn:Ushift}
\bU\smono{\bbeta_{1}; \ldots; \bbeta_{K}} \bQ \, \bD_p =
\bU\smono{\bbeta_{1} + \bgamma_p; \ldots; \bbeta_{K} + \bgamma_p} \bQ, \quad p = 1, \dots, P,
\end{align}
which is a generalized eigenvalue problem where $\bD_p$ are the eigenvalues
and $\bQ$ are the eigenvectors.
Crucially, the eigenvalues, $\bD_p = \diag(\theta^*_{1,p}, \ldots, \theta^*_{K,p})$ give us solutions to our parameters.
Note that for any choice of $\bbeta_{1}, \ldots, \bbeta_{K}$ and $p \in [P]$, we have generalized eigenvalue problems that share eigenvectors $Q$, though their eigenvectors $D_p$ may differ.
Corresponding eigenvalues (and hence solutions) can be obtained by solving a simultaneous generalized eigenvalue problem, e.g., by using random projections like Algorithm B of~\citep{anandkumar12moments} or more robust \citep{kuleshov2015simultaneous} simutaneous diagonalization algorithms \citep{cardoso1996joint,bunse1993numerical,afsari2006simple}.

\begin{algorithm}
\begin{algorithmic}
\caption{Basic solution extraction}
\label{algo:basic}
\REQUIRE column space basis $\bU \in \Re^{s(\mdeg) \times K}$, $\bbeta_{1}, \ldots, \bbeta_{K} \in \NN^P$ so that $\rank\left(\bUbasis\right) = K$
\ENSURE Estimated solutions $\prmh\ind{1}, \ldots, \prmh\ind{K} \in \Re^P$ 
\vspace{0.5em}
\FOR{parameter dimensions $p = 1, \ldots, P$}
\STATE  $\bB_p \gets \bU\smono{\bgamma_p+[\bbeta_{1}, \ldots, \bbeta_{K}]}$ 
\STATE  $\bgamma_q \gets [\indicator{p=q}]_{p=1,\ldots, P}$
\ENDFOR\\
\vspace{0.5em}
Find $\bQ$:
\STATE \quad solve the simultaneous eigenvalue problems: $\bB_p \bQ = \bUbasis \bQ \bD_p$ for $p = 1,\ldots, P$

Find $\prmh\ind{k}$:

\STATE \quad Let $[\bq_1, \ldots, \bq_K] \eqdef \bQ$ for $\bq_k \in \Re^{K \times 1}$
\STATE \quad $\sprmh\inds{k}{p} \gets \frac{\boldsymbol{\rho}^\T \bB_p \bq_k}{\boldsymbol{\rho}^\T \bUbasis \bq_k}$ for $p=1,\ldots,P$, $k=1,\ldots,K$, and arbitrary $\boldsymbol{\rho}$
\end{algorithmic}
\end{algorithm}

We describe one approach to solve \eqref{eqn:Ushift} (\algorithmref{basic}),
which is similar to Algorithm B of~\citep{anandkumar12moments}. 
The idea is to take $P$ random weighted combinations of the equations \eqref{eqn:Ushift}
and solve the resulting (generalized) eigendecomposition problems.
Let $R \in \Re^{P \times P}$ be a random matrix whose entries are drawn from $\sN(0, 1)$.
A simple approach to find $\bQ$ is solving 
\[\bUbasis^{\inv}\left(\sum_{p=1}^P R_{q,p} \bUbasisshift{p} \right)\bQ = \bQ \bD_q\]
for each $q = 1, \dots, P$.
The resulting eigenvalues can be collected in $\Lambda \in \Re^{P \times K}$,
where $\Lambda_{q,k} = \bD_{q,k,k}$.
Note that by definition $\Lambda_{q,k} = \sum_{p=1}^P R_{q,p} \prmt\inds{k}{p}$, 
so we can simply invert to obtain
$[\prmt\ind{1}, \dots, \prmt\ind{K}] = R^{-1} \Lambda$. 
Although this simple approach does not have great numerical properties, these
eigenvalue problems are solvable if the
eigenvalues $[\lambda_{q,1}, \ldots, \lambda_{q,K}]$ are distinct for
all $q$,
which happens with
probability 1 as long as the parameters $\prmt_k$ are different from
each other.
In \appendixref{solution-extraction-connections}, we show how the tensor
decomposition algorithm from~\cite{anandkumar12moments} can be seen as solving
\eqref{eqn:Ushift} for a particular instantiation of $\bbeta_1, \ldots
\bbeta_K$.

\section{Applications}
\label{sec:models}

\providecommand{\mlrx}{x}
\providecommand{\mlry}{\upsilon}
\providecommand{\mlrw}{\bw}
\providecommand{\mvm}{\boldsymbol{\xi}}

\newcommand{\applicationrow}[4]{
    \multicolumn{2}{c}{\bf #1} \\ \midrule
    {\bf Model}  & {\bf Observation functions}\\
    \multirow{3}{0.45\textwidth}{%
      \footnotesize
        #2
    }
    & {%
      \footnotesize
    #3
    }\\
    &
    {\bf Moment polynomials} \\
    & {%
      \footnotesize
    #4
    }
}

\begin{table}
  \caption{Applications of the \polymom{} framework.
  See \appendixref{application-details} for more details.}
\label{tbl:examples}
\begin{center}
  \begin{tabular}{p{0.45\textwidth} p{0.45\textwidth}}
    \toprule

    \applicationrow{ 
      Mixture of linear regressions
    }{ 
        $\dat = [\mlrx, \mlry]$ is observed where
        $\mlrx \in \Re^D$ is drawn from an unspecified distribution and
        $\mlry \sim \sN(\mlrw \cdot \mlrx, \sigma^2 I)$, and  $\sigma^2$ is known.
        The parameters are $\prm_k = (\mlrw_k) \in \Re^{D}$.
    }{ 
      $\phi_{\balpha, b}(\dat) = \mlrx^\balpha \mlry^b$ for $0 \le |\balpha| \le 3, b \in [2]$.
    }{ 
      \parbox{0.45\textwidth}{
        $f_{\balpha, 1}(\prm) = \sum_{p=1}^P \E[\mlrx^{\balpha+\bgamma_p}] w_p$ \\
        $f_{\balpha, 2}(\prm) = \E[\mlrx^{\balpha}] \sigma^2 + \sum_{p,q=1}^P \E[\mlrx^{\balpha} x_p x_q] w_p w_q$,
        where the $\bgamma_p \in \NN^P$ is $1$ in position $p$ and $0$ elsewhere.
      }
    } \\ \midrule
    \applicationrow{ 
      Mixture of Gaussians
    }{ 
        $\dat \in \Re^D$ is observed where
        $\dat$ is drawn from a Gaussian with diagonal covariance: $\dat \sim \sN(\mvm, \diag(\bc))$.
        The parameters are $\prm_k = (\mvm_k, \bc_k) \in \Re^{D + D}$.
    }{ 
      $\phi_{\balpha}(\dat) = \dat^\balpha$ for $0 \le |\balpha| \le 4$.
    }{ 
      $f_{\balpha}(\prm) = \prod_{d=1}^D h_{\alpha_d}(\xi_d, c_d)$.\tablefootnote{
        $h_{\alpha}(\xi, c) = \sum_{i=0}^{\lfloor\alpha/2\rfloor} a_{\alpha,\alpha-2i} \xi^{\alpha-2i} c^{i}$ and 
        $a_{\alpha, i}$ be the absolute value of the coefficient of the degree $i$ term of the $\alpha^\textrm{th}$ (univariate) Hermite polynomial. 
        For example, the first few are 
        $h_{1}(\xi, c) = \xi$, $h_{2}(\xi, c) = \xi^2+c$,
        $h_{3}(\xi, c) = \xi^3+3\xi c$,
        $h_{4}(\xi, c) = \xi^4 + 6\xi^2 c + 3c^2$.
      }
    } \\ & \\ \midrule

    \applicationrow{ 
      Mixture of Binomials
    }{ 
        $\dat \in \NN$ is observed where
        $\dat$ is drawn from a binomial of $m$ trials, $B(m,\sprm)$.
        The parameters are $\prm_k = p_k \in \Re$.
    }{ 
      $\phi_{i}(\dat) = \indicator{\dat = i }$ for $\dat \in \NN$, $0 \le \dat \le m$.
    }{ 
      $f_{i}(\dat) = \binom{m}{i}p^i (1-p)^{m-i}$.
    } \\ & \\ \midrule

    \applicationrow{ 
      Multiview mixtures
    }{ 
      With 3 views,
      $\dat = [x\oft{1}, x\oft{2}, x\oft{3}]$ is observed where $x\oft{1}, x\oft{2}, x\oft{3} \in \Re^D$ and 
      $x\oft{\ell}$ is drawn from an unspecified distribution with mean $\mvm\oft{\ell}$ for $\ell \in [3]$.
      The parameters are $\prmt_k = (\mvm_k\oft{1}, \mvm_k\oft{2}, \mvm_k\oft{3}) \in \Re^{D + D + D}$. 
    }{ 
      $\phi_{ijk}(\dat) = x_i\oft{1} x_j\oft{2} x_k\oft{3}$ where $1 \le i, j, k \le D$.
    }{ 
      $f_{ijk}(\prm) = \xi_i\oft{1} \xi_j\oft{2} \xi_k\oft{3}$.
    } \\ & \\ & \\ \bottomrule 
\end{tabular}
\end{center}
\end{table}

Let us now look at some applications of \polymom{}.
\tableref{examples} presents several models with corresponding observation functions and moment polynomials.
It is fairly straightforward to write down observation functions for a
given model.
The moment polynomials can then be derived by computing expectations under the model,
a computation comparable to deriving updates for EM\@. 

We implemented \polymom{} for several mixture models in Python and the
code can be found at \url{https://github.com/sidaw/polymom}. A simpler
and cleaner demostration of solving a mixture of Gaussian in the
noiseless case can be
found at \url{https://github.com/sidaw/mompy} in the form of an
IPython Notebook (\texttt{extra\_examples.ipynb}).
We used CVXOPT to handle the SDP and the random projections algorithm to extract solutions.
In \tableref{experiments}, we show the relative error
$\max_k  || \prm\ind{k} - \prmt\ind{k} ||_2 / ||\prmt\ind{k} ||_2$
averaged over 10 random models of each class.
\newcommand{\first}[1]{\textbf{#1}}
\newcommand{\sketchy}[1]{\sout{#1}}
\begin{table}
\centering
\begin{tabular}{|l|l|lll|lll|lll|}
   &         \textbf{Methd.}  &  EM&TF&Poly&       EM& TF&Poly&       EM&TF&Poly\\
\hhline{-----------}
\textbf{Gaussians} &$K,D$& \multicolumn{3}{c|}{$T=10^3$} & \multicolumn{3}{c|}{$T=10^4$} & \multicolumn{3}{c|}{$T=10^5$} \\
\hhline{-----------}
spherical  & $2,2$ &
\first{0.37} & 2.05 & 0.58 &
\first{0.24} & 0.73 & 0.29 &
0.19 & 0.36 & \first{0.14} \\
diagonal&  $2,2$  &        
\first{0.44} & \sketchy{2.15} & 0.48 &
0.48 & \sketchy{4.03} & \first{0.40} &
0.38 & \sketchy{2.46} & \first{0.35} \\
constrained &   $2,2$     &
0.49 & \sketchy{7.52} & \first{0.38} &
0.47 & \sketchy{2.56} & \first{0.30} &
0.34 & \sketchy{3.02} & \first{0.29} \\
\hhline{===========}
\textbf{Others} & $K,D$& \multicolumn{3}{c|}{$T=10^4$} & \multicolumn{3}{c|}{$T=10^5$} & \multicolumn{3}{c|}{$T=10^6$} \\
\hhline{-----------}
3-view&  $3,3$ &
\first{0.38}&0.51&0.57&          
0.31&0.33&\first{0.26}&              
0.36&0.16&\first{0.12} \\
lin. reg.&     $2,2$ &     
-&-&\first{3.51}&                
-&-&\first{2.60}&                 
-&-&\first{2.52} \\
\end{tabular}
\caption{$T$ is the number of samples, and the error metric is defined above.
\textbf{Methods:}
EM\@: sklearn initialized with k-means using 5 random restarts;
TF\@: tensor power method implemented in Python;
Poly\@: Polymom by solving \problemref{gmpsdp}.
\textbf{Models:} for mixture of Gaussians, we have
$\sigma \approx 2 || \mu_1 - \mu_2 ||_2$; `spherical' and `diagonal'
describes the type of covariance matrix.
The mean parameters of constrained Gaussians satisfies $\mu_1 + \mu_2 = 1$.
The best result is \first{bolded}. 
TF only handles spherical variance, but it was of interest to
see what TF does if the data is drawn from mixture of Gaussians with
diagonal covariance, these results are in \sketchy{strikeout}.
}
\label{tbl:experiments}
\end{table}

\paragraph{Guarantees.}
In the rest of this section, we will discuss guarantees on parameter recovery
for each of these models.  In summary, we match many of the existing
results in the literature for the mixture of linear regressions and
multiview mixtures when $K \le D$. In these case the moment matrix is fully determined
by the linear constraints and
\problemref{gmpsdp} is just a linear solve. More discussions can be
found in \appendixref{application-details}.

In addition, we can obtain \emph{per-instance} guarantees in the following sense.
Recall that \polymom{} involves solving an SDP relaxation and performing solution extraction.
If the SDP solution has a flat extension (\theoremref{flat}) at the true number of components $K$
(a checkable assumption),
then we have solved the moment completion problem exactly,
and since solution extraction always works, we are guaranteed to obtain the true parameters.
On the other hand, if the SDP solution has a higher rank $K' > K$,
then as a consolation prize,
we have found a $K'$-mixture model that matches the moments (that we observed)
of the true $K$-mixture model.

\section{Conclusion}
\label{sec:discussion}
We presented an unifying framework for learning many types of mixture models via
the method of moments.
For example, for the mixture of Gaussians, we can apply the same algorithm
to both mixtures in 1D needing higher-order moments~\citep{pearson1894contributions,hardt2014sharp}
and mixtures in high dimensions where lower-order moments
suffice~\cite{anandkumar13tensor}.
The Generalized Moment Problem \citep{lasserre2008semidefinite,lasserre2011moments} and
its semidefinite relaxation hierarchies is what gives us the
generality, although we rely heavily on the ability 
of nuclear norm minimization to recover the underlying rank.
As a result, while we always obtain parameters satisfying the moment conditions,
we do not have formal guarantees on consistent estimation in general,
although we do have guarantees for several model families.
The second main tool is solution extraction,
which characterizes 
a more general structure of mixture models compared the tensor
structure observed by
\cite{anandkumar13tensor, anandkumar12moments}.
This view draws connections to the literature on solving polynomial
systems, where many techniques might be useful~\cite{stetter2004numerical,
  sturmfels2002solving, henrion2005detecting}.
Finally, through the connections we've drawn,
it is our hope that \polymom{} can make the method of
moments as turnkey as EM on more latent-variable models,
and provide a way to improve the statistical efficiency of method of moments procedures.



\paragraph{Acknowledgments.}
This work was supported by a Microsoft Faculty Research Fellowship to the third
author and a NSERC PGS-D fellowship for the first author.


\bibliographystyle{plainnat}
\bibliography{refdb/all}
\clearpage
\appendix
Some details and discussions are deferrred to the appendices.
\appendixref{applications-details} contains more details on the examples described in
\tableref{examples}; \appendixref{separability} defines separable models, which is the class of
models where
\polymom{} can be used, and gives a non-example;
\appendixref{theory-moment-completion} has more details and pointers
to references on the theory of moment completion;
\appendixref{extensions} describes some extensions such as constraints on
parameters, noise and some preliminary works on improving the statistical efficiency. 
\section{Examples}
\label{sec:applications-details}
\providecommand{\mprm}{\boldsymbol\xi} 
\providecommand{\minds}[2]{_{#2,#1}} 
\providecommand{\mind}[1]{_{#1}} 
\renewcommand{\bZ}{Z}
In this section, we first describe how undercomplete tensor factorization can be
seen as a special case of the solution extraction framework, and
elaborate on the mixture of Gaussians, the mixture of linear regressions
and the multiview mixture model.

\subsection{Tensor factorization as solution extraction}
\label{sec:solution-extraction-connections}

\begin{example}[Tensor decomposition as solution extraction]
Many latent variable models have been tackled via tensor decomposition
\citep{anandkumar13tensor}, and symmetric, undercomplete tensor decomposition
can be framed as a solution extraction problem.
Suppose we observe the tensor $\bT \eqdef \sum_{k=1}^K {\prmt}_k^{\otimes 3} \in \Re^{P \times P \times P}$.
We would like to recover the components $\prmt_k$.
For us, the inputs are constraints $\sprm_r \sprm_s  \sprm_t  - T_{rst}=0$ for all $r,s,t=1,\ldots, P$.
Choose $\monos(\prm) = [1, \sprm_1, \ldots, \sprm_P, \sprm_1^2, \sprm_1 \sprm_2,
\ldots, \sprm_P^2] = [1, \prm, \vecs(\prm \otimes \prm)]$, where $\vecs: \Re^{P
\times P} \rightarrow \Re^{P^2}$ just flattens the matrix. 
In the simplest case, suppose $P=K$ and $\rank(\bU) = K$.
Then the fully observed $\bU$ is
\begin{align}
\bU = 
\kbordermatrix{
\textrm{size} & P\\
1      & \bU_1 \\
P  &    \bU_2 \\
P^2  &    \bU_3} = 
\kbordermatrix{
\textrm{terms} & \prm\\
1      & \Loy(\prm) \\
\prm   &    \Loy(\prm \otimes \prm) \\
\vecs(\prm \otimes \prm)   &   \Loy(\vecs(\prm \otimes \prm) \otimes \prm)  }
\end{align}
where the linear functional $\Loy$ applies elementwise.
One choice of basis is just all the variables $\bUbasis=\bU_2$ and the eigenvalue problem we are required to solve is the generalized Hermitian eigenvalue problem $\bU_2 \bQ \bD = \left(\sum_{p=1}^P \eta_p \Loy(\sprm_p \prm \otimes \prm ) \right) \bQ$.
\cite{anandkumar12moments} proposed an algorithm that is procedurally identical,
where, in their notation $\operatorname{Pairs} \eqdef \bU_2$ and $\operatorname{Triples}(\eta) \eqdef \left(\sum_{p=1}^P \eta_p \Loy(\sprm_p \prm \otimes \prm ) \right)$,
and the algorithm proposed needed to solve the eigenvalue problem $B(\eta) = \operatorname{Pairs}^{-1} \operatorname{Triples}(\eta)$.
\end{example}

Typically, $\bbeta_{1}, \ldots, \bbeta_{K}$ are just the first $K$ monomials in $\bv$
(i.e. the $K$ monomials of the smallest degree).

Under this formulation, generalization to the fully-observed overcomplete tensor decomposition
case $K \ge D = P$ is clear if we observe enough moments to have enough basis vectors such that
$\rank(\bUbasis) = K$:
\begin{prop}
If $K \leq 1+P+P^2+\cdots+P^{\mdeg} = \frac{P^{\mdeg+1}-1}{P-1}$,
then solution extraction succeeds if we observe moments up to order $2\mdeg+1$
and monomials vectors of the true parameters
$\monos_{\mdeg}(\prm\ind{1}), \ldots, \monos_{\mdeg}(\prm\ind{K})$
are linearly independent.
\end{prop}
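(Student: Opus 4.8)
The plan is to verify that the hypotheses supply exactly the three ingredients that \algorithmref{basic} (equivalently, the generalized eigenvalue formulation \eqref{eqn:Ushift}) needs: (i) the moment matrix $\MM_\mdeg(\byt)$ has rank exactly $K$, so that its column space recovers $\colsp(\bV)$; (ii) there is a choice of $\bbeta_1,\ldots,\bbeta_K$ for which $\bUbasis$ is invertible; and (iii) the shifted rows $\bUbasisshift{p}$ are accessible from the observed moments. The bound $K \le 1+P+\cdots+P^\mdeg = s(\mdeg)$ is precisely the statement that $K$ vectors can be linearly independent in the $s(\mdeg)$-dimensional monomial space, so the independence assumption on $\monos_\mdeg(\prmt\ind{1}),\ldots,\monos_\mdeg(\prmt\ind{K})$ is feasible, and that assumption is what I will use throughout.

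First I would establish (i) and (ii). Since $\bV = [\monos_\mdeg(\prmt\ind{1}),\ldots,\monos_\mdeg(\prmt\ind{K})]$ has linearly independent columns by hypothesis, $\rank(\bV) = K$; as $\bP = \diag(\pi_1,\ldots,\pi_K) \succ 0$, the factorization $\MM_\mdeg(\byt) = \bV\bP\bV^\top$ gives $\rank(\MM_\mdeg(\byt)) = K$ and $\colsp(\MM_\mdeg(\byt)) = \colsp(\bV)$. Hence an SVD produces $\bU$ with orthonormal columns spanning $\colsp(\bV)$, so $\bV = \bU\bQ$ for a unique invertible $\bQ \in \Re^{K\times K}$, matching the setup of \eqref{eqn:Ushift}. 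For (ii), because $\bV$ has rank $K$ it contains $K$ linearly independent rows; choosing $\bbeta_1,\ldots,\bbeta_K$ to index them makes $\bV\smono{\bbeta_1;\ldots;\bbeta_K}$, and therefore $\bUbasis = \bV\smono{\bbeta_1;\ldots;\bbeta_K}\bQ^{-1}$, nonsingular. (In practice one takes the $K$ smallest-degree monomials and checks the rank, as required by \algorithmref{basic}.)

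The crux is (iii): the selected $\bbeta_k$ may have degree up to $\mdeg$, so the shifted exponents $\bbeta_k + \bgamma_p$ reach degree $\mdeg+1$ and index rows outside $\MM_\mdeg(\byt)$. This is exactly where observing moments up to order $2\mdeg+1$ enters, and it covers every $K \le s(\mdeg)$ uniformly (when $K \le s(\mdeg-1)$ a degree-$(\mdeg-1)$ basis with shifts in degree $\mdeg$ would already suffice at order $2\mdeg$). I would form the rectangular moment table $\tilde\MM \eqdef \Loy(\monos_{\mdeg+1}(\prm)\,\monos_\mdeg(\prm)^\top)$, whose entries have degree at most $(\mdeg+1)+\mdeg = 2\mdeg+1$ and so are all available. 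Writing $\tilde\bV$ for the canonical basis carried up to degree $\mdeg+1$, we have $\tilde\MM = \tilde\bV\bP\bV^\top$, and right-multiplying by $\bU$ and using $\bU^\top\bU = \bI$ recovers the extended basis $\tilde\bU = \tilde\MM\,\bU\,(\bU^\top\MM_\mdeg(\byt)\bU)^{-1}$, which satisfies $\tilde\bV = \tilde\bU\bQ$ with the \emph{same} $\bQ$ (the inverted matrix equals $\bQ\bP\bQ^\top$, invertible because $\bQ$ is invertible and $\bP\succ0$). Now every row $\bbeta_k+\bgamma_p$ is a row of $\tilde\bU$, so $\bUbasisshift{p}$ is well defined.

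Finally, substituting $\bV = \tilde\bU\bQ$ into the shift identity \eqref{eqn:Vshift} yields exactly the family \eqref{eqn:Ushift}, $\bUbasis\,\bQ\,\bD_p = \bUbasisshift{p}\,\bQ$, with common eigenvectors $\bQ$ and eigenvalues $\bD_p = \diag(\sprmt\inds{1}{p},\ldots,\sprmt\inds{K}{p})$; since $\bUbasis$ is invertible by (ii), these are solvable. Because the components $\prmt\ind{1},\ldots,\prmt\ind{K}$ are distinct, a random combination $\sum_p R_{q,p}\bD_p$ has distinct diagonal entries with probability one, so the simultaneous diagonalization in \algorithmref{basic} returns a single $\bQ$ and hence all of $\bD_1,\ldots,\bD_P$, recovering $[\prmt\ind{1},\ldots,\prmt\ind{K}]$ as in the text. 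I expect the main obstacle to be step (iii): making precise that the degree-$(\mdeg+1)$ shifted rows are recovered from order-$2\mdeg+1$ moments with the same change of basis $\bQ$, since this is what ties the dimension count $s(\mdeg)$, the linear-independence hypothesis, and the required moment order together; the remaining steps are routine given \theoremref{flat} and the development around \eqref{eqn:Ushift}.
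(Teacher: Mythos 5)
Your proof is correct and follows essentially the same route as the paper: both arguments rest on forming the rectangular moment matrix with rows indexed by monomials up to degree $\mdeg+1$ and columns up to degree $\mdeg$, observing that its entries are moments of order at most $2\mdeg+1$, and reading off $\bUbasis$ from the degree-$\le\mdeg$ block and $\bUbasisshift{p}$ from the degree-$(\mdeg+1)$ rows before running the generalized eigendecomposition. The paper's own proof is only a sketch of this construction, so your additional details (the rank-$K$ argument via $\bP \succ 0$, the existence of an invertible $K\times K$ sub-block, and the explicit verification that the extended basis $\tilde\bU$ inherits the \emph{same} change of basis $\bQ$) are welcome elaborations rather than a departure.
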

\begin{proof}
 To get the theoretical result, it suffices to consider higher-order moments:
\begin{align}
\bU =
\kbordermatrix{
\textrm{terms} & \vecs(\prm^{\otimes \mdeg})\\
\vecs(\prm^{\otimes \mdeg})   & \Loy( \vecs(\prm^{\otimes \mdeg}) \otimes \vecs(\prm^{\otimes \mdeg}) )  \\
\vecs(\prm^{\otimes \mdeg+1})  &  \Loy( \vecs(\prm^{\otimes \mdeg+1}) \otimes \vecs(\prm^{\otimes \mdeg}) )}
\end{align}
where we can take the $\bUbasis$ from the top block, and
$\bUbasisshift{q}$
belongs to the bottom block for all $q$.
So $2\mdeg+1$ order moments is needed if $K \leq P^r$ and this result is comparable
to~\cite{anandkumar2014sample}.
In practice, we would take all moments $\vecs(\prm^{\otimes 1}), \dots, \vecs(\prm^{\otimes \mdeg+1})$.
We may use lower order moments as well:
\begin{align}
\bU =
\kbordermatrix{
\textrm{terms} & \vecs(\prm^{\otimes 1}) \quad \vecs(\prm^{\otimes 2}) \quad \cdots \quad \vecs(\prm^{\otimes \mdeg})\\
\vecs(\prm^{\otimes 1})   & \vdots \\
\vecs(\prm^{\otimes 2})   & \\
\vdots & \cdots  \ \Loy(\vecs(\prm^{\otimes l}) \otimes
                             \vecs(\prm^{\otimes m})) \ \cdots \\
\vecs(\prm^{\otimes \mdeg+1})  & \vdots \\
}
\end{align}
where the entry of this matrix at block $l,m$ is
$\Loy(\vecs(\prm^{\otimes l}) \otimes \vecs(\prm^{\otimes m}))$ as
expected. While this still requires observing $2\mdeg+1^{th}$ order
moments, lower order moments are more accurate and can result in
better parameter estimates.
\end{proof}

\subsection{Moment completion for specific models}
\label{sec:application-details}

For several mixture models, we work out the polynomial
constraints, and then discuss the moment
completion problem. 
\subsubsection{Mixture of Linear Regressions}
\providecommand{\mlrx}{x}
\providecommand{\mlry}{\upsilon}
\providecommand{\mlrw}{w}
In \exampleref{mlr}, we described the mixture of linear regressions model in 1-dimension with parameters $\prmt_k = (w_k, \sigma^2_k)$.
Let us now consider the $D$-dimensional extension: we observe $\dat = [\mlrx, \mlry]$\footnote{
  We use $\mlry$ here since $y$ is reserved for the parameter moments.}
where $\mlrx \eqdef [x_1, \ldots, x_D]$ is drawn from an unspecified distribution and $\mlry = \mlrw \cdot \mlrx + \epsilon$ with $\epsilon \sim \sN(0, \sigma^2)$ for a known $\sigma$.
The parameters are $\prmt_k = (\mlrw_k)$ for $1 \le k \le K$. 
Next,
we choose observation functions $\obs_{\balpha, b}(\dat) = \mlrx^\balpha \mlry^b$ for $\balpha : 0 \le |\balpha| \le 3$ and $0 \le b \le 3$,
with corresponding moment polynomials:
$f_{\balpha, b}(\prm, \dat) = \mlrx^\balpha \EE_{\epsilon \sim
\sN(0,\sigma^2)} \left[{(\mlrw \cdot \mlrx + \epsilon)}^b\right]$.
These polynomials can be expressed in closed form using Hermite polynomials (see \sectionref{mog}).
For example, $f_{\zeros,2}(\prm, \dat) = \left({(\mlrw \cdot \mlrx)}^2 + \sigma^2\right)$.


Given these observation functions and moment polynomials, and data, the \polymom{} framework solves the moment completion problem (\problemref{gmpsdprank}) followed by solution extraction (\sectionref{solution-extraction}) to recover the parameters.
Further, we can guarantee that \polymom{} can recover parameters for this model when $K \le D$ by showing that \problemref{gmpsdprank} can be solved exactly.
Note that while no entry of the moment matrix is directly observed, each observation gives us a linear constraint on the entries of the moment matrix.
Let $\bgamma_p \in \NN^P$ be
the vector with value $1$ at position $p$ and $0$ elsewhere, then 
$\Loy(f_{\balpha, 1}(\prm)) = \sum_{p=1}^P
\E[\mlrx^{\balpha+\bgamma_p}] y_{\bgamma_p}$, and
$\Loy(f_{\balpha, 2}(\prm)) = 
\left(\E[\mlrx^{\balpha}] \sigma^2 + \sum_{p,q=1}^P 
\E[\mlrx^{\balpha+\bgamma_p+\bgamma_q}] y_{\bgamma_p+\bgamma_q}\right)$,
etc. 
When $K \le D$, there are enough equations that this system admits an unique solution for $\by$. 

Note that~\cite{chaganty13regression} recover parameters for this model by solving a series of low-rank tensor recovery problems, which ultimately requires the computation of the same moments described above.
In contrast, the \polymom{} framework makes the dependence on moments upfront and takes care of the heavy-lifting in a problem-agnostic manner.
Furthermore, we can even obtain parameters outside the regime of~\cite{chaganty13regression}: with the above observation functions and moment polynomials, we can recover parameters (with a certificate) \ac{when $K=4, D=3$}.

\subsubsection{Mixture of Gaussians}
\label{sec:mog}
We now look at $D$-dimensional extensions to \exampleref{mog}. 
Let the data be drawn from Gaussians with diagonal covariance, $x \vert z \sim \sN(\xi_z, \diag(c_z))$. 
The parameters of this model are $\prmt_k = (\xi_k, c_k) \in \Re^{2D}$. 
The observable functions are $\phi_\balpha(\dat) \eqdef \dat^\balpha$,
and the moment polynomials are $f_\balpha(\prm) = \EE[\dat^\balpha] = \prod_{d=1}^D h_{\alpha[d]}(\xi[d], c[d])$, where 
$h_{\alpha}(\xi, c) = \sum_{i=0}^{\lfloor\alpha/2\rfloor} a_{\alpha,\alpha-2i} \xi^{\alpha-2i} c^{i}$ and 
$a_{\alpha, i}$ be the absolute value of the coefficient of the degree $i$ term of the $\alpha^\textrm{th}$ (univariate) Hermite polynomial. 
The first few are 
$h_{1}(\xi, c) = \xi$, $h_{2}(\xi, c) = \xi^2+c$, $h_{3}(\xi, c) =
\xi^3+3\xi c$, $h_{4}(\xi, c) =
\xi^4 + 6\xi^2 c + 3c^2$.

Using this set of $\phi_{\balpha}$ and $f_{\balpha}$, \polymom{} will attempt to solve the SDP in \problemref{gmpsdp} and recover the parameters.
In this case however, the moment conditions are non-trivial and we cannot guarantee recovery of the true parameters.
However, \polymom{}
is guaranteed to recover parameters that match the moments and that minimizes nuclear norm. 

This full covariance case poses no conceptual trouble for \polymom{}.
In the case of full covariance, Isserlis’ theorem (or Wick’s theorem)
allows us to derive these polynomials and
\cite{triantafyllopoulos2002moments} provides an algorithm for
computing these polynomials. Toeplitz covariance or
other structured covariances with parameter sharing or constraints
are also conceptually handled under \polymom{}.

We can modify this model by introducing constraints: consider the case of 2D mixture
where the mean parameters for all components lies on a parabola $\xi_1 -
\xi_2^2 = 0$. 
In this case, we just need to add constraints to \problemref{gmpsdp}:
$y_{(1,0)+\bbeta} - y_{(0,2)+\bbeta} = 0$ for all $\bbeta \in \NN^2$
up to degree $\degree{\bbeta} \leq 2 \mdeg-2$. 
\ac{A few more details.}

By incorporating these contraints at estimation time, we can possibly identify the model parameters with less moments.
See \sectionref{extensions} for more details.

\subsubsection{Mixture of Binomials}
We include a quick example on the mixture of binomials in 1 dimension
to illustrate how \polymom{} can be applied to a discrete model. In
this model, $x \in \NN$ and $0 \leq
x \leq m$ and each component is a binomial distribution for $m$ trials
each with probabiliy $p$ of success. The probability mass function for the entire
mixture model is $p(x) = \sum_{k=1}^K
\pi_k \binom{m}{x} p_k^x (1-p_k)^{m-x}$.
There are only $K$ scalar parameters $p_1, \ldots, p_K$ and the
observation function is just the empirical probabilities
$\phi_{i}(x) = \indicator{x = i }$ for $x,i \in \NN$, $0 \le x,i \le
n$, with corresponding polynomials $f_{i}(p) = \binom{m}{i}p^i
(1-p)^{m-i}$, which can be expanded to become linear constraints in
\problemref{gmpsdp}.

\subsubsection{Multiview Mixtures}
\label{sub:multiviewmix}
\providecommand{\mprm}{\boldsymbol\xi} 
\providecommand{\minds}[2]{_{#2,#1}} 
\providecommand{\mind}[1]{_{#1}} 
\renewcommand{\bZ}{Z}

\ac{Clean up!}
Here we consider the three-view mixture model which has been well studied in~\cite[section 3.3]{anandkumar13tensor}.
We will show that we can solve the model without explicit whitening, a transformation that has been shown to introduce noise\cite{kuleshov2015tensor}.
The model is a mixture of three conditionally independent arbitrary distributions parameterized by their conditional means: we have $z \sim \Multinomial{\pi}, \bx_l \vert z \sim p_{l}(\mprm^{(l)}_z)$ where $p_{l}(\mprm^{(l)}_z)$ is such that $E_{\bx_l \vert z}[\bx_l] = \mprm$. The parameters are $\prm\ind{k} = [\mprm^{(1)}, \mprm^{(2)}, \mprm^{(3)}]$. 
Using the observation functions $\phi = [x\oft{1}, x\oft{2}, x\oft{3}, x\oft{1} \otimes x\oft{2}, \ldots, x\oft{1} \otimes x\oft{2} \otimes x\oft{3}]$, we have the following moment polynomials, $f = [\mprm\oft{1}, \mprm\oft{2}, \mprm\oft{3}, \mprm\oft{1} \otimes \mprm\oft{2}, \ldots, \mprm\oft{1} \otimes \mprm\oft{2} \otimes \mprm{3}]$.

The multiview mixture model is another model for which we can guarantee parameter recovery when $K \le D$. To prove this is the case, we will again show that \problemref{gmpsdp} can be solved exactly.
It suffices to consider just the first $P$ columns of the
moment matrix $\MM_2$, which are almost directly observable. 
As before, $\vecs(\cdot)$ just flattens
a matrix into a vector.

\begin{align}
  \MM_{2}^\T = 
\kbordermatrix{
&  \mprm\mind{1}&  \mprm\mind{2}&  \mprm\mind{3}&  \vecs(\mprm\mind{1} \otimes \mprm\mind{2} )&  \vecs(\mprm\mind{1} \otimes \mprm\mind{3} )&  \vecs(\mprm\mind{2} \otimes \mprm\mind{3} )  \\ 
\mprm\mind{1}&  \bZ_{2,0,0}&  \bY_{1,1,0}&  \bY_{1,0,1}&  \bZ_{2,1,0}&  \bZ_{2,0,1}&  \bY_{1,1,1}  \\ 
\mprm\mind{2}&  \bY_{1,1,0}&  \bZ_{0,2,0}&  \bY_{0,1,1}&  \bZ_{1,2,0}&  \bY_{1,1,1}&  \bZ_{0,2,1}  \\ 
\mprm\mind{3}&  \bY_{1,0,1}&  \bY_{0,1,1}&  \bZ_{0,0,2}&  \bY_{1,1,1}&  \bZ_{1,0,2}&  \bZ_{0,1,2}  \\ 
}
\end{align}
where $Y_{\alpha_1,\alpha_3,\alpha_3}$ and $Z_{\alpha_1,\alpha_3,\alpha_3}$ are both equal to $\Loy(\mprm\mind{1}^{\otimes \alpha_1} \otimes \mprm\mind{2}^{\otimes\alpha_2} \otimes \mprm\mind{3}^{\otimes \alpha_3})$, but are used to respectively denote observed and unknown variables.
However, this equation is only partially true as both sides contain
the same set of values but the precise arrangements depends on where
the minor matrix appears in the moment
matrix. We ignore this problem as it should be clear from the row and column labels.
In the undercomplete case, it is assumed that $\rank(\bU) = K \leq
\min(P_1, P_2, P3)$, thus we can easily
complete this matrix using simple linear algebra in the exact case by
repeatedly applying \autoref{thm:gammamatrix} below.
Generally, we may try to complete the moment matrix by solving \problemref{gmpsdp} from these partial observations, provided that optimizing with the nuclear norm recovers the true rank.
\ac{Experimentally, we have also been able to recover parameters with certificates when $K=4, D=3$.}


\begin{lem}[low rank completion of missing corner]
\label{thm:gammamatrix}
For any matrix
$\Gamma =
\begin{bmatrix}
A & B\\
C & X
\end{bmatrix}
$
with a missing block $X$,
where $\rank(\Gamma) = \rank(A) = \rank(B) = K$ and $A \in \Re^{K \times K}$, $X =
C A^{\inv} B$ uniquely completes $\Gamma$.
\begin{prf}
Because $A$ contains the entire $K$ elements basis, there exists unique $Y,Z \in \Re^{K \times K}$ so that
$B = AY$
and $C = ZA$. Similarly, $X = Z B = C A^{\inv} B$.
\end{prf}
\end{lem}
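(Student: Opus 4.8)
The plan is to exploit invertibility of $A$ together with the global rank constraint $\rank(\Gamma) = K$. Since $A \in \Re^{K \times K}$ with $\rank(A) = K$, the block $A$ is invertible, so $A^{\inv}$ exists and the candidate $X = CA^{\inv}B$ is well defined. The first observation I would record is that the top block-row $[A\ B]$ already has rank $K$: it has only $K$ rows, and the presence of the rank-$K$ submatrix $A$ forces its rank to be exactly $K$. Because $[A\ B]$ is a submatrix of $\Gamma$ and $\rank(\Gamma) = K$, the row space of $\Gamma$ is $K$-dimensional and contains the $K$-dimensional row space of $[A\ B]$; hence the two row spaces coincide.

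Next I would use this to pin down the bottom block-row. Every row of $[C\ X]$ lies in the row space of $\Gamma$, which equals the row space of $[A\ B]$, so there is a matrix $Z$ with $[C\ X] = Z[A\ B]$, that is, $C = ZA$ and $X = ZB$. Since $A$ is invertible, $C = ZA$ determines $Z = CA^{\inv}$ uniquely, and substituting gives $X = ZB = CA^{\inv}B$, exactly the claimed formula. Symmetrically one may argue along columns using $\rank(B) = K$ and the column span of the left block-column, which gives the same $X$; this is the route reflected in the factorizations $B = AY$ and $C = ZA$.

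For uniqueness, I would note that the derivation above assumed only that the completed $\Gamma$ has rank $K$: any such completion forces the bottom rows into $\operatorname{row}([A\ B])$, hence $[C\ X] = Z[A\ B]$ with the same uniquely determined $Z = CA^{\inv}$, and therefore $X = CA^{\inv}B$. Conversely, taking $X = CA^{\inv}B$ makes $[C\ X] = CA^{\inv}[A\ B]$ a left-multiple of the rank-$K$ matrix $[A\ B]$, so $\Gamma$ has rank exactly $K$; this confirms the formula yields a genuine completion.

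I expect the only real subtlety to be the uniqueness bookkeeping, namely arguing cleanly that the \emph{global} rank-$K$ constraint (rather than mere consistency of the visible blocks) is what forces the bottom rows into $\operatorname{row}([A\ B])$, and that invertibility of $A$ makes the representing matrix $Z$ unique. The linear-algebra identities themselves are routine; the care lies in justifying the ``there exists a unique $Z$'' step by the full row rank of $[A\ B]$ rather than merely asserting it.
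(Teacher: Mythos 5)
Your proof is correct and follows essentially the same route as the paper's: both observe that the rank hypotheses force $[C\ X]=Z[A\ B]$ (equivalently $C=ZA$, $B=AY$) and then invert $A$ to pin down $Z=CA^{-1}$, hence $X=ZB=CA^{-1}B$. You additionally make explicit the row-space containment behind ``there exists a unique $Z$'' and verify the converse that this choice of $X$ really yields a rank-$K$ completion, both of which the paper's one-line proof leaves implicit.
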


\section{Separability}
\label{sec:separability}
For conditional models, the coefficients of the moment polynomials can depend on the data but such dependence can sometimes break the process of converting from component moment constraints to mixture moment constraints.
In this section, we define separability, which is a sufficient condition on what dependence is allowed under \polymom{} and then we give some counterexamples.

Consider a mixture of linear regressions \citep{viele2002regression,chaganty13regression},
where the parameters $\prm\ind{k} = (w_k, \sigma^2_k)$ are the slope and noise variance for each component $k$.
Then each data point $\dat = [x, y]$ is drawn from component $k$
by sampling $x$ from an unknown distribution \emph{independent} of $k$
and setting $y = w_k x + \epsilon$, where $\epsilon \sim \sN(0, \sigma_k^2)$.
If we take observation function $\obs_{b,c}(\dat) = x^b y^c$,
then the corresponding $f_{b,c}(\prm)$ depends on the unknown distribution of $x$:
for example, $f_{1,2}(\prm) = \EE[x^3] w + \E[x] \sigma^2$.
In contrast, for the mixture of Gaussians, we had $f_2(\prm) = \mu^2 + \sigma^2$,
which only depends on the parameters.

However, not all is lost, since the key thing is that $f_{1,2}(\prm)$ depends
only on the distribution of $x$, which is independent of the component $k$
and furthermore can be estimated from data.
More generally, we allow $f_n$ to depend on $\dat$ but in a restricted way.
We say that $f_n(\prm, \dat)$ is \emph{separable} if
$\E[f_n(\prm, \dat)]$ does not depend on the parameters $\allprm$ of the mixture generating $\dat$. In other words,
\begin{align}
\label{eqn:separable}
\EE[\obs_n(\dat)] = \EE[f_n(\prm, \dat)]
\text{ where for all }k: \
\E[f_n(\prm, \dat) \mid z=k] = \E[f_n(\prm, \dat)] \in \Re[\prm].
\end{align}
In this case, we can define $f_n(\prm) \eqdef \E[f_n(\prm, \dat)]$,
and \eqref{eqn:momentEquations} is still valid.
For the mixture of linear regressions,
we would define $f_{b,c}(\prm, \dat) = x^b \, \EE_{\epsilon \sim \sN(0, \sigma^2)}[(w x + \epsilon)^c]$.
In this more general setup,
the approximate moment equations on $T$ data points
is $\frac1T \sum_{t=1}^T [f_n(\prm, \dat_t)] = \frac1T \sum_{t=1}^T \obs_n(\dat_t)$.

\providecommand{\mlrw}{w} 
An example of non-separability is a
mixture of linear regressions where the variance is not a parameter
and is different across mixture components:
$\prm = (w)$ and $\dat = (x, y)$.
Recall that $\E[x y^2] = \sum_{k=1}^K \pi_k (\E[x^3] w_k^2 + \E[x] \sigma_k^2)$,
but $\E[x^3] w_k^2 + \E[x] \sigma_k^2$ cannot be written as $\E[f_n(w_k, \dat)]$ for any $f_n$,
since it depends on $\sigma_k^2$. 
Thus, this example falls outside our framework.
In the simplest case, we can make $f_n(w, \dat)$ separable by introducing $\sigma_k$ as a parameter,
but this is not always possible if the noise distribution is unknown or if $\sigma_k(x)$ depends on $x$.
For example, if we have heteroskedastic noise, $\EE[x^a (y - w \cdot x) ] = 0$ are valid moment constraints for individual components, but it is not clear how to convert this to the mixture case.

\section{Theory of the moment completion problem}
\label{sec:theory-moment-completion}
For solution extraction, we assumed that moments of all monomials are
observed but for many models only polynomials of parameters can be
estimated from the data.
For example, in a Gaussian mixture the 2$^{nd}$ moment observable function $\obs(x) =
\xi^2 + c$ is a polynomial, but solution extraction requires moments
of monomials like $\xi^2$ and $c$.
Furthermore, we assumed
in \sectionref{solution-extraction}
that there exists underlying true parameters $[\prmt\ind{k}]_{k=1}^K$
while an arbitrary moment sequence of the parameters $\by$ and its
corresponding moment matrix $\MM(\by)$ may not correspond to any
parameters (i.e. no representing measure). In \sectionref{models},
we showed how moment completion can be done with just
linear algebra for multiview models, and we now focus on the harder case
of having to solve the SDP \problemref{gmpsdp}.
\pled{pop back up and remind reader where we are in the context of estimating mixture models;
  refer back to examples;
  intuitively, moments give us polynomials $\xi_1^2 + c_1$, but solution extraction 
  needs monomials $\xi_i^2$
}

While we do not have a complete answer since the rank constrained
\problemref{gmpsdprank} cannot be solved,
we point to the relevant literature and give some sufficient conditions for
solution extraction and sufficient conditions for parameter recovery.

\subsection{Conditions for solution extraction}
In \sectionref{solution-extraction}, we showed that
simple conditions based only on the column space basis is sufficient for solution
extraction to be successful. However, to further investigate
consistency and noise, we need to address a few more important issues.
First consider the noiseless setting,
we may not have enough moment contraints to guarantee a unique solution (identifiability).
Even if we assume that we have enough constraints for identifying a $K$ mixture,
we still do not know if solving the relaxed \problemref{gmpsdp} that relaxed the $\rank = K$ constraint can recover the true parameters.
Second, under noise, there may not exist a rank $K$ basis of the moment matrix
and even when a rank $K$ basis exists, it may not correspond to any true parameters.

In the case when some moment matching parameters can be extracted, the
moment matrix satisfies the \emph{flat extension} condition, which is the
same as conditions in \sectionref{solution-extraction} where ``$\bB_p \eqdef \bU\smono{\bgamma_p + [\bbeta_1, \ldots, \bbeta_K]}$ is
observed'' and $\bUbasis$ is a column space basis of $\MM_{\mdeg}(\by)$.
Let the highest degree monomial
of $\bUbasis$ be of degree $\mdeg-1 = \deg(\prm^{\bbeta_K}) =
\degree{\bbeta_K}$,
and the highest degree monomial of $\bB_p \eqdef
\bU\smono{\bgamma_p + [\bbeta_1, \ldots, \bbeta_K]}$ be of degree
$\mdeg = \degree{\bgamma_p+\bbeta_K} =
\degree{\deg(\bbeta_K)}+1$. Since $\bUbasis$ is a basis of $\colsp(\MM_{\mdeg}(\by))$
\begin{align}
\rank\left(\MM_{\mdeg-1}(\by)\right) &= \rank\left(\bUbasis\right) = K\\
 &= \rank\left(\MM_{\mdeg}(\by)\right) \geq \rank\left(\bU\smono{\bgamma_p + [\bbeta_1, \ldots, \bbeta_K]}\right).
\end{align}
If we got this basis from the moment matrix, then we say that the moment matrix $\MM_{\mdeg-1}(\by)$ corresponding to
$\bUbasis$ has a flat extension, because $\MM_{\mdeg-1}(\by)$ can be extended to a
moment matrix $\MM_{\mdeg}(\by)$ with higher degree monomials
without an increase in rank.
The concept of flat extension and its consequences are of central importance for the
 truncated moment problem, which is quite relevant to our problem and studied by
\cite{curto1996solution,curto1998flat,curto2000truncated, curto2005truncated}.
Next, we reproduce the simplest flat extension theorem:
\begin{thm}[\cite{curto1996solution}: flat extension theorem]
\label{thm:flatext}
Suppose $\MM_{\mdeg-1}(\by) \succeq 0$ and there exists $\MM_{\mdeg}(\by)$ 
so that $\rank(\MM_{\mdeg}(\by)) = \rank(\MM_{\mdeg-1}(\by))$
(i.e. a flat extension), then there exists an unique
$\rank(\MM_{\mdeg}(\by))$-atomic representing measure $\mu$ of $\MM_{\mdeg}(\by)$.
\end{thm}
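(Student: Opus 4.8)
The plan is to prove the theorem constructively, building the commuting multiplication operators that already power the solution extraction of \sectionref{solution-extraction} and reading the atoms off as their joint eigenvalues. Set $K \eqdef \rank(\MM_{\mdeg}(\by)) = \rank(\MM_{\mdeg-1}(\by))$ and pick monomials $\prm^{\bbeta_1}, \ldots, \prm^{\bbeta_K}$ of degree at most $\mdeg-1$ whose columns form a basis $\mathcal{B}$ of $\colsp(\MM_{\mdeg-1}(\by))$; the equal-rank (flatness) hypothesis forces these same $K$ columns to span $\colsp(\MM_{\mdeg}(\by))$ as well. My first step is to note that $\MM_{\mdeg}(\by) \succeq 0$ is automatic: writing $\MM_{\mdeg}(\by) = \left[\begin{smallmatrix} \MM_{\mdeg-1}(\by) & B \\ B^\T & C \end{smallmatrix}\right]$, the rank condition puts $\colsp(B) \subseteq \colsp(\MM_{\mdeg-1}(\by))$ and kills the Schur complement, so $C = B^\T \MM_{\mdeg-1}(\by)^{\mpinv} B$ (the corner-completion identity of Lemma~\ref{thm:gammamatrix}, applied to a maximal nonsingular $K \times K$ principal block) and $\MM_{\mdeg}(\by)$ is a congruence of $\MM_{\mdeg-1}(\by) \succeq 0$. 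I also record that the $K \times K$ principal submatrix $G \eqdef \MM_{\mdeg-1}(\by)[\mathcal{B}, \mathcal{B}]$ is positive definite, being a nonsingular principal submatrix of a PSD matrix (equivalently, the Gram matrix of the linearly independent basis columns); it will serve as an inner product on $\Re^K$.

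Next I would build, for each coordinate $p \in [P]$, the multiplication matrix $M_p \in \Re^{K \times K}$. Because $\degree{\bbeta_i + \bgamma_p} \le \mdeg$, the column of $\MM_{\mdeg}(\by)$ indexed by $\prm^{\bbeta_i + \bgamma_p}$ lies in $\lspan(\mathcal{B})$, giving unique coefficients with $\MM_{\mdeg}(\by)[\cdot, \bbeta_i + \bgamma_p] = \sum_j (M_p)_{ji}\, \MM_{\mdeg}(\by)[\cdot, \bbeta_j]$; this is exactly the shift relation \eqref{eqn:Ushift} read on the completed matrix. The crux of the argument -- and the step I expect to be hardest -- is to show that the $M_p$ pairwise commute, are self-adjoint for $G$, and that positivity then upgrades their joint spectrum to $K$ \emph{distinct real} points. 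Commutativity and self-adjointness both fall out of one Hankel identity: testing the defining relation of $M_q$ against the row $\prm^{\bbeta_i + \bgamma_p}$ (legitimate precisely because flatness keeps the basis at degree $\le \mdeg-1$, leaving one unit of slack so that $\degree{\bbeta_i + \bgamma_p} \le \mdeg$ and $\degree{\bbeta_i + \bbeta_l + \bgamma_p + \bgamma_q} \le 2\mdeg$) yields $(G M_p M_q)_{il} = \Loy(\prm_p \prm_q \prm^{\bbeta_i} \prm^{\bbeta_l})$, which is symmetric in all four factors; likewise $(G M_p)_{il} = \Loy(\prm_p \prm^{\bbeta_i} \prm^{\bbeta_l})$ is symmetric in $i,l$. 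Hence $G M_p M_q = G M_q M_p$ and $G M_p = (G M_p)^\T$, so $M_p M_q = M_q M_p$ and each $M_p$ is self-adjoint with respect to $G$. A commuting family of $G$-self-adjoint operators is simultaneously diagonalizable with real eigenvalues; moreover, since each $M_p$ is diagonalizable in the regular representation, the quotient algebra $\Re[\prm]/\mathcal{I}$, where $\mathcal{I}$ is the ideal generated by $\nullsp(\MM_{\mdeg}(\by))$, is reduced, so the joint eigenspaces are one-dimensional and the $K$ joint eigenvalue tuples $\prm\ind{k} \eqdef ((\lambda_k)_1, \ldots, (\lambda_k)_P)$ are distinct. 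These are the atoms.

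It remains to assemble the measure and match moments. The shared eigenvectors give the factorization $\MM_{\mdeg}(\by) = \bV \bP \bV^\T$ with canonical basis $\bV = [\monos_{\mdeg}(\prm\ind{1}), \ldots, \monos_{\mdeg}(\prm\ind{K})]$ exactly as in \sectionref{solution-extraction}; since $\bV$ has full column rank $K$, choosing $u_k$ with $\bV^\T u_k = e_k$ gives $\pi_k = u_k^\T \MM_{\mdeg}(\by) u_k \ge 0$, and $\pi_k = 0$ would drop the rank below $K$, so in fact $\pi_k > 0$, with $\sum_k \pi_k = y_{\boldsymbol{0}} = 1$. Reading entries off $\bV \bP \bV^\T$ then verifies $y_\balpha = \sum_{k=1}^K \pi_k (\prm\ind{k})^\balpha$ for every $\degree{\balpha} \le 2\mdeg$, so $\mu = \sum_{k=1}^K \pi_k\, \delta(\prm - \prm\ind{k})$ is a $K$-atomic representing measure.

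Finally, for uniqueness I would show any representing measure $\nu$ of $\MM_{\mdeg}(\by)$ must be $\mu$. For every polynomial $p$ with coefficient vector $\hat{p} \in \nullsp(\MM_{\mdeg}(\by))$ we have $\int p^2\, d\nu = \hat{p}^\T \MM_{\mdeg}(\by) \hat{p} = 0$, so each atom of $\nu$ is a common real zero of $\mathcal{I}$; but those zeros are exactly the joint eigenvalues $\prm\ind{1}, \ldots, \prm\ind{K}$ constructed above (the ideal $\mathcal{I}$ is zero-dimensional of codimension $K$). Hence $\nu$ is supported on the same $K$ points, and the weights are pinned down by the full-column-rank system $\bV\, (\pi_1, \ldots, \pi_K)^\T = (y_\balpha)_{\degree{\balpha} \le \mdeg}$, so $\nu = \mu$. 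The main obstacle throughout is the second paragraph: turning the flatness and positivity hypotheses into commutativity together with reality and distinctness of exactly $K$ atoms, i.e.\ that $\mathcal{I}$ is real radical of codimension $K$.
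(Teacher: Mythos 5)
First, a point of reference: the paper does not prove this statement. Theorem~\ref{thm:flatext} is imported verbatim from \cite{curto1996solution}, so there is no in-paper argument to compare yours against. Your proposal follows the standard modern proof of the Curto--Fialkow flat extension theorem (the commuting-multiplication-operator argument), and the computations you do carry out are correct: the Schur-complement observation that an equal-rank extension of a PSD block is automatically PSD (so the missing hypothesis $\MM_\mdeg(\by)\succeq 0$ is indeed free); the fact that flatness lets a basis of $\colsp(\MM_{\mdeg-1}(\by))$ serve as a basis of $\colsp(\MM_{\mdeg}(\by))$, so the shift matrices $M_p$ are well defined; and the Hankel identities $(G M_p)_{il} = \Loy(\sprm_p\,\prm^{\bbeta_i}\prm^{\bbeta_l})$ and $(G M_p M_q)_{il} = \Loy(\sprm_p\sprm_q\,\prm^{\bbeta_i}\prm^{\bbeta_l})$, whose degree bookkeeping (basis monomials at degree $\le\mdeg-1$ leave one unit of slack in rows and one in columns) is exactly what makes $G$-self-adjointness and commutativity legitimate.

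That said, there is a genuine gap at the point you flag, and it is the actual content of the theorem rather than a technicality. Two facts are used but not established. (a) ``Reading entries off $\bV\bP\bV^\T$ verifies $y_\balpha=\sum_k\pi_k(\prm\ind{k})^\balpha$ for all $\degree{\balpha}\le 2\mdeg$'': the simultaneous diagonalization only gives this for entries indexed by pairs of \emph{basis} monomials. To cover all of $\by$ you must show that every monomial of degree $\le\mdeg$ reduces, modulo $\nullsp(\MM_\mdeg(\by))$, to an element of $\lspan(\mathcal B)$ whose evaluation at each joint eigenvalue $\prm\ind{k}$ equals $(\prm\ind{k})^\balpha$; this requires an induction on degree through the $M_p$'s (the normal-form calculus), and it is also what forces the $K$ atoms to be distinct, since otherwise $\bV$ would have rank below $K$, contradicting $\rank(\MM_\mdeg(\by))=K$. (b) The uniqueness paragraph needs $\lspan(\mathcal B)\cap\mathcal I=\{0\}$, i.e.\ that the ideal generated by the kernel has codimension exactly $K$ with real variety exactly the $K$ joint eigenvalues; without this, $\mathcal I$ could a priori collapse further and ``the atoms of $\nu$ are common zeros of $\mathcal I$'' would not pin down the support. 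Both (a) and (b) amount to the Curto--Fialkow lemma that the kernel of a flat positive moment matrix is the truncation of a zero-dimensional real radical ideal. You correctly identify this as the obstacle, but you do not overcome it, so what you have is a correct and well-organized skeleton of the known proof rather than a complete one.
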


Here the first column of $\MM_{\mdeg}(\by)$ contains every monomial of degree up to $\mdeg$ so that $\deg(\monos_\mdeg(\prm)) = \mdeg$.  However, several generalizations of the flat extension theorem are also useful for estimation of mixture models where sparse monomials are handled \citep{laurent2008sparse, laurent2009generalized} or
where constraints are handled \citep{curto2005truncated}.

The conceptual importance is that \autoref{thm:flatext} allows us to
work with just the moment matrix satisfying constraints from possibly
noisy observations, without assuming the moment matrix is generated by
some true parameters. Of course, it also provides a checkable
criterion for when solutions can be extracted
\citep{nie2013certifying}. We still do not know if solving
\problemref{gmpsdp} provides a flat extension in a finite number of steps.
\cite{nie2014optimality, nie2014truncated, nie2013linear} investigated
this issue very recently and showed that linear optimization over the
cone of moments have finite convergence under generic
conditions (theorem 4.2 of \cite{nie2013linear}).

Still, our issue is not fully resolved as
representing measures under linear
constraints may not be unique, and as a result even a flat moment matrix
may not correspond to the true parameters. 
For parameter fitting, we'd like to find the solution
with minimal rank or otherwise optimal in some way. We explore this
issue next but unfortunately we can only give some partial answers.
\begin{prop}[existence of $\bC$]
In the noiseless setting, there exist $\bC$ so that minimizing
\mbox{$\bC \bigcdot \MM_{\mdeg}(\by)) = \bc \cdot \by$} will give the right solution.
\begin{prf}
Let $\MM_{\mdeg}(\by) = \bU \boldsymbol{\Sigma} \bU^\T$ be the SVD with
$\bU \in \Re^{s(\mdeg) \times K}$ and $\boldsymbol{\Sigma} \in \Re^{K
  \times K}$. Let $\bU_{\perp} \in \Re^{s(\mdeg) \times (s(\mdeg)-K)}$
be the orthogonal compliment of $\bU$, then any $\bC = \bU_{\perp} \bD
\bU_{\perp}^\T$ suffices and $\bD \in
\Re^{(s(\mdeg)-K) \times (s(\mdeg)-K)}$ is an arbitrary diagonal
matrix with positive diagonal elements.
\end{prf}
\end{prop}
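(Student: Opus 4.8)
The plan is to exhibit a single scaling matrix $\bC$ for which the true moment matrix $\MM_\mdeg(\byt) = \bV\bP\bV^\top = \sum_{k=1}^K \pi_k\, \bv(\prmt\ind{k})\bv(\prmt\ind{k})^\top$ is a minimizer of the linear objective $\bC\bigcdot\MM_\mdeg(\by)$ over the feasible set of \problemref{gmpsdp}, and then argue it is the only one. Writing the rank-$K$ eigendecomposition $\MM_\mdeg(\byt) = \bU\boldsymbol{\Sigma}\bU^\top$ with $\bU\in\Re^{s(\mdeg)\times K}$ spanning the column space and $\bU_\perp\in\Re^{s(\mdeg)\times(s(\mdeg)-K)}$ spanning its orthogonal complement, I would take $\bC = \bU_\perp\bD\bU_\perp^\top$ for an arbitrary diagonal $\bD\succ 0$. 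By construction this $\bC$ is positive semidefinite and ``sees'' only the directions orthogonal to the true column space.

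First I would verify that the true solution attains objective value zero: since $\bU_\perp^\top\bU = 0$, we get $\bC\bigcdot\MM_\mdeg(\byt) = \tr(\bU_\perp\bD\bU_\perp^\top\bU\boldsymbol{\Sigma}\bU^\top) = 0$. Next, because every feasible $\MM_\mdeg(\by)$ is constrained to satisfy $\MM_\mdeg(\by)\succeq 0$ and $\bC\succeq 0$, the trace of a product of two PSD matrices is nonnegative, so $\bC\bigcdot\MM_\mdeg(\by)\ge 0$ on the entire feasible set. Hence $\MM_\mdeg(\byt)$ is a global minimizer with optimal value $0$.

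To upgrade ``a minimizer'' to ``the right solution'', I would characterize all optimal points. Suppose $\MM_\mdeg(\by)\succeq 0$ is feasible with $\bC\bigcdot\MM_\mdeg(\by)=0$; factoring $\MM_\mdeg(\by)=\bL\bL^\top$ gives $0 = \tr(\bD\,(\bU_\perp^\top\bL)(\bU_\perp^\top\bL)^\top)$, and since $\bD\succ 0$ this forces $\bU_\perp^\top\bL = 0$, i.e. $\colsp(\MM_\mdeg(\by))\subseteq\colsp(\bU)=\colsp(\MM_\mdeg(\byt))$. Thus every optimal matrix has rank at most $K$ and the same $K$-dimensional column space as the truth. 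I would then combine this column-space restriction with the linear moment constraints $\sum_\balpha \coeff{n}{\balpha}y_\balpha = \EE[\obs_n(\dat)]$ and the identifiability assumption, invoking the flat extension theorem (\theoremref{flat}) to guarantee a genuine $K$-atomic representing measure, to conclude that the unique feasible matrix of this form is $\MM_\mdeg(\byt)$, whose representing measure is exactly $\mu^*$.

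The main obstacle is precisely this last uniqueness step: the zero objective and PSD feasibility only pin down the column space, not the matrix itself. Turning column-space containment into exact recovery is where identifiability of the parameters from the chosen observation functions, and flatness of the extension at rank $K$, genuinely enter; without them one only recovers \emph{some} low-rank moment-matching matrix. A minor caveat worth flagging is that $\bC=\bU_\perp\bD\bU_\perp^\top$ is merely PSD rather than strictly PD as stipulated in \problemref{gmpsdp}; if strict positivity is required, I would instead take $\bC = \bU_\perp\bD\bU_\perp^\top + \epsilon\,\bI$ for small $\epsilon>0$ and argue that the true solution stays optimal up to an $O(\epsilon)$ perturbation of the objective, or simply relax the requirement on $\bC$ to $\bC\succeq 0$.
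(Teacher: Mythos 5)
Your construction $\bC = \bU_\perp \bD \bU_\perp^\top$ is exactly the one the paper uses; the paper's proof simply exhibits this matrix and asserts it suffices, whereas you supply the verification it omits (zero objective at the truth, nonnegativity of $\tr(\bC\MM)$ over the PSD feasible set, and the column-space characterization of the optimal face). Your two caveats --- that zero objective only pins down the column space so uniqueness still requires identifiability and flatness, and that this $\bC$ is merely positive semidefinite rather than $\bC \succ 0$ as stipulated in the relaxed problem --- are genuine gaps in the paper's one-line argument rather than in yours.
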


The convex iteration algorithm \cite{dattorro05convexoptimization}
is one way to reduce rank that sometimes works for us empirically,
where if the convex iteration algorithm converges to 0,
then the moment matrix has rank $K$.

\section{Extensions}
\label{sec:extensions}

\subsection{Constraints on parameters}
\label{sec:constraints}
Constraints on parameters is a common and important consideration in
applications. 
While constraints can often be addressed in maximum likelihood
or \emph{maximum a prioterior}  learning using EM \citep[see shared
parameters]{koller2009probabilistic}, it is less clear how to address
constraints under the \tensordecomp{} approach because of its reliance on
\tensorstruct{} and it is well-known that MME generally can give us
parameters outside of the parameter space even in the well-specified case.

\begin{example} Examples of constraints on parameters
\textbf{Some parameters are known}: Gaussian with sparse covariance matrix
where we already know that some dimensions are uncorrelated;
to solve a substitution cipher using an HMM, the transitions matrix is
a language model that is given.

\textbf{Parameters are tied}: transitions in an HMM might only depend on the
relatively difference between states if the states are ordered i.e. the transition matrix is Toeplitz.

\textbf{Polytope constraints}: some of the parameters might be
probabilities (e.g. multinomial distribution): 
\[\prm=[\pi_1, \ldots,
  \pi_P, \xi_1, \ldots],\ \pi_p \geq 0,\ \sum_{p=1}^P \pi_p =1\]

\textbf{Semialgebraic constraints}: For some polynomial $g \in
\Re[\prm]$, $g_i(\prmt\ind{k}) \geq 0, i = 1, \ldots, I$. This includes discrete sets $\prm_i \in \{0,1\}$ and ellipsoids.
\end{example}

The obvious attempt is to project to the feasible set after computing an
unconstrained estimation with MME.
But this approach has several serious issues.
First, some constrained models are only identifiable after the
constraints are taken into account, which happens when the model has a
lot of parameters and we cannot observe correspondingly more moments.
In this case, unconstrained estimation is useful only if we can characterize the entire subset of
the parameters space satisfying moment conditions, which is generally
not possible in the \tensordecomp{} approach.
Second, we need to determine what projection to use.
In the case of two equal parameters, if one estimate is much more
noisy than the other, it can be better to just ignore the more noisy
estimate than to project under the wrong metric (see \exampleref{inefficient}).
Third and strangely, even in the case when the first two issues are handled,
it was observed by \cite{cohen2013experiments} for probablities parameters, that
clipping to 0 is empirically inferior compared to heuristics like taking the
absolute value, which is not a projection.

Under the \polymom{} formulation, we can take constraints into account
during estimation.
The technique of localizing matrix \citep{curto2000truncated} in moment theory allows us to deal
with semialgebraic constraints. Of course, the computational
complexity increases if the constraints are themselves complicated and
high degree. Next, we define the localization matrix, give an example,
and then give a constrained version of the flat extension theorem.
\begin{example}[localizing matrix for an inequality constraint]
Let $\prm = [c, \xi]$, so that $\prm^\balpha = c^{\alpha_1}
\xi^{\alpha_2}$ and $\Loy(\prm^\balpha) = y_\balpha$, and chose the
monomials $\bv_2(\prm) = [1, c, \xi, c^2, c \xi, \xi^2]$. Suppose
that $c$ is the variance and we want to have constraint that 
$c -1 \geq 0$, then
\begin{align}
\MM_{1}((c-1)\by) = 
\kbordermatrix{
&        1       &       c     & \xi     \\
1&      y_{1,0}-1      & y_{2,0} - y_{1,0} & y_{1,1}-y_{1,0}\\ 
c&    y_{2,0}-y_{1,0}     & y_{3,0}-y_{2,0} & y_{2,1}-y_{2,0}\\
\xi&    y_{1,1}-y_{0,1}     & y_{2,1} - y_{1,1} & y_{1,2}-y_{0,2}
}
\end{align}
it is clear that a necessary condition for extracted solutions
to satisfy the constraint $c - 1 \geq 0$ is that
$\MM_{1}((c-1)\by) \succeq 0$ since 
$\boldsymbol{f}^\T \MM_{1}((c-1)\by) \boldsymbol{f} = \Loy(f(\prm)^2 (c-1)) \geq 0$.
\end{example}

\subsection{Noise and statistical efficiency}
In the presense of noise \problemref{gmpsdp} may not be feasible and
even if it was, it may not be ideal to exactly match noisy moments.
Furthermore, it is argued that higher order moments are too noisy to
be useful, but there are also more of them and they do contain more
information about the model parameters as long as we can model how
noisy they are. We consider the problem with slack $\epsilon$ and a
weighting matrix $\bW \succ 0 \in \Re^{N\times N}$ modelling how much noise is
present in each \cnstrfunname{}. 
This effect is fairly well-known, and here is a very
simple example which shows that even much more noisy measurements can
improve efficiency.
\begin{example}[efficient estimation]
\label{exa:inefficient}
Suppose $X \sim \sN([\xi, \xi], \diag[\sigma^2, c \sigma^2])$ and we would
like to estimate the mean parameter $\xi$ by matching moments.
Any estimators of the form
$\hat{\xi} = \frac1{T} \sum_{t=1}^T(\gamma \sdat\inds{t}{1} +(1-\gamma) \sdat\inds{t}{2} )$ are
consistent and has risk
\begin{align}
R &= \EE \left[ (\hat{\xi}  - \xi)^2 \right] = \EE\left[ \left(\gamma
   \sum_{t=1}^T \dat\inds{t}{2} - \gamma \xi  +(1-\gamma)  \sum_{t=1}^T
    \dat\inds{t}{2}  - (1-\gamma)\xi\right)^2\right]\\
   &= \EE \left[ \gamma^2 \left(\xi-\frac1{T} \sum_{t=1}^T \dat\inds{t}{1}\right)^2
   + (1-\gamma)^2 \left(\xi -\frac1{T} \sum_{t=1}^T \dat\inds{t}{2}\right)^2
     \right]\\
     &= \frac1{T}(\gamma^2 \sigma^2 + (1-\gamma)^2 c \sigma^2)
\end{align}
under the squared loss, and the efficient estimator would have $\gamma
= \frac{c-1}{c}$ and a risk of
$\frac{\sigma^2}{T}\frac{c^2-c+1}{c^2}$. For $c=10$, the risk for
efficient estimation is $0.91 \frac{\sigma^2}{T}$ whereas for
$\gamma=0.5$, the risk is $2.75 \frac{\sigma^2}{T}$.
\end{example}

This example suggests that a weighting matrix $W$ has the potential to
make use of higher order moments and also give better estimates. Consider
\begin{align}
\label{prob:gmpslack}
\minprob{\bg, \by}{ \bC \bigcdot \MM(\by) }
{
\cnstrfun_n = \sum_{\balpha} \coeff{n}{\balpha}y_\balpha - \EE[\obs_n (\dat)\\
& \cnstrfuns^\T \bW \cnstrfuns \leq \epsilon \\
&\MM(\by) \succeq 0.\\
}
\end{align}
In the simplest case when $\bW = \bI_{N}$, and $\epsilon = 0$, 
\problemref{gmpslack} is the same as \problemref{gmpsdp}.

\begin{align}
\label{prob:gmpslacksdp}
\minprob{\bg, \by}{ \bC \bigcdot \MM(\by) }
{
\cnstrfun_n = \sum_{\balpha} \coeff{n}{\balpha}y_\balpha - \EE[\obs_n (\dat)] \\
& \bW \bigcdot  \bF \leq \epsilon \\
&\MM(\by) \succeq 0\\
& \begin{bmatrix}
      1 &   \cnstrfun^\T \\
      \cnstrfun & \bF
    \end{bmatrix} \succeq 0
}
\end{align}

A good weighting matrix $\bW$ should put more weights on moment conditions
that can be estimated more precisely. The asymptotically
efficient weighting matrix suggested by the \emph{Generalized Method
  of Moments} \citep{hansen1982gmm} is
\begin{align}
\bW^{\inv} = \EE\left[\cnstrfuns(\allprm, \dat)  \cnstrfuns(\allprm, \dat)^\T\right]
\approx \frac1{T} \sum_{t=1}^T \cnstrfuns(\allprm, \dat)  \cnstrfuns(\allprm, \dat)^\T
\end{align}

\begin{theorem}[\GeMM{} is asymptotically efficient
  \citep{hansen1982gmm}]
Let $g_n(\prm, \XX) \eqdef \sum_k f_n(\prm\ind{k}) - h_n(\XX)$ so that $\EE[h_n(\XX)] =
\EE[\phi_n(\dat)]$. 
Let $W^{\inv}= \EE[\bg(\prm, \XX) \bg(\prm, \XX)^\T ] \approx \frac1{T}
\sum_{t=1}^T \bg(\prm, \XX\ind{t}) \bg(\prm, \XX\ind{t})^\T$
Iterative \GeMM{} is efficient with this weighting matrix $\bW$.
\end{theorem}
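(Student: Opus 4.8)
The plan is to invoke the standard large-sample theory for the Generalized Method of Moments. Writing $\bar{\cnstrfuns}_T(\prm) \eqdef \frac1T\sum_{t=1}^T \cnstrfuns(\prm,\XX\ind{t})$ for the empirical moment vector, the estimator is $\prmh = \arg\min_{\prm} \bar{\cnstrfuns}_T(\prm)^\T \bW \bar{\cnstrfuns}_T(\prm)$, and by the construction $\EE[h_n(\XX)] = \EE[\obs_n(\dat)]$ the population moment conditions $\EE[\cnstrfuns(\prmt,\XX)] = 0$ hold at the truth. Under the usual regularity conditions --- the moment map identifies $\prmt$, $\cnstrfuns(\cdot,\XX)$ is smooth in $\prm$ with an integrable dominating envelope, and $\boldsymbol\Omega \eqdef \EE[\cnstrfuns(\prmt,\XX)\cnstrfuns(\prmt,\XX)^\T] \succ 0$ --- I would first establish consistency $\prmh \convergep \prmt$ via a uniform law of large numbers applied to the (continuous, uniquely minimized) objective, for any fixed positive-definite $\bW$.

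For the asymptotic distribution, I would linearize the stationarity condition $\bG_T(\prmh)^\T \bW \bar{\cnstrfuns}_T(\prmh) = 0$ around $\prmt$, where $\bG_T$ is the sample Jacobian. Combining the central limit theorem $\sqrt{T}\,\bar{\cnstrfuns}_T(\prmt) \overset{d}{\to} \sN(0,\boldsymbol\Omega)$ with $\bG_T \convergep \bG \eqdef \EE[\nabla_\prm \cnstrfuns(\prmt,\XX)]$ and solving for $\sqrt{T}(\prmh-\prmt)$ yields asymptotic normality with the sandwich covariance $\bV_{\bW} = (\bG^\T \bW \bG)^{-1}\bG^\T \bW \boldsymbol\Omega \bW \bG (\bG^\T \bW \bG)^{-1}$.

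The conceptual heart of the argument is the optimality of $\bW = \boldsymbol\Omega^{-1}$: I claim $\bV_{\bW} \succeq \bV_{\boldsymbol\Omega^{-1}} = (\bG^\T\boldsymbol\Omega^{-1}\bG)^{-1}$ for every admissible $\bW$. This is a Gauss--Markov-type computation. Set $\bA \eqdef (\bG^\T\bW\bG)^{-1}\bG^\T\bW$ and $\bB \eqdef (\bG^\T\boldsymbol\Omega^{-1}\bG)^{-1}\bG^\T\boldsymbol\Omega^{-1}$, so that $\bA\bG = \bB\bG = \bI$ and $\bV_{\bW} = \bA\boldsymbol\Omega\bA^\T$. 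A direct check using $\bA\bG = \bB\bG = \bI$ gives $\bB\boldsymbol\Omega\bA^\T = \bB\boldsymbol\Omega\bB^\T = (\bG^\T\boldsymbol\Omega^{-1}\bG)^{-1}$, so the cross terms cancel and
\[
\bV_{\bW} - \bV_{\boldsymbol\Omega^{-1}} = (\bA-\bB)\,\boldsymbol\Omega\,(\bA-\bB)^\T \succeq 0,
\]
since $\boldsymbol\Omega \succ 0$. Thus no weighting beats $\boldsymbol\Omega^{-1}$.

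Finally, the iterative version replaces the unknown $\boldsymbol\Omega$ by a consistent estimate. A consistent (if inefficient) first stage, e.g.\ $\bW = \bI$, gives $\prmh^{(1)} \convergep \prmt$; plugging it into $\frac1T\sum_t \cnstrfuns(\prmh^{(1)},\XX\ind{t})\cnstrfuns(\prmh^{(1)},\XX\ind{t})^\T$ produces $\hat{\boldsymbol\Omega} \convergep \boldsymbol\Omega$, hence $\hat{\bW} = \hat{\boldsymbol\Omega}^{-1} \convergep \boldsymbol\Omega^{-1}$. Because $\bV_{\bW}$ depends on $\bW$ only through its probability limit, Slutsky's theorem shows the second stage attains $(\bG^\T\boldsymbol\Omega^{-1}\bG)^{-1}$, and further iterations leave this limit unchanged. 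I expect the main obstacle to be not any single algebraic step but discharging the regularity conditions that justify the linearization --- identification of $\prmt$ from the chosen moments, the uniform LLN for $\bG_T$, and integrability of the envelope --- which for the mixture-of-polynomials moment functions here reduce to finiteness of the relevant parameter moments and must be verified model by model.
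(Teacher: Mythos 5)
The paper offers no proof of this statement at all: it is imported verbatim from \citet{hansen1982gmm} as a known result, and the text ends immediately after the theorem. Your sketch is a correct reconstruction of the standard argument from that reference --- consistency via a uniform LLN, asymptotic normality with the sandwich covariance $(\bG^\T \bW \bG)^{-1}\bG^\T \bW \boldsymbol\Omega \bW \bG (\bG^\T \bW \bG)^{-1}$, the Gauss--Markov step $(\bA-\bB)\boldsymbol\Omega(\bA-\bB)^\T \succeq 0$ showing $\bW=\boldsymbol\Omega^{-1}$ is optimal, and the two-step/iterative feasibility argument --- and the algebra checks out. Two caveats worth stating if you were to flesh this out in the paper's setting: the conclusion is efficiency \emph{within the class of estimators based on these moment conditions}, not Cram\'er--Rao efficiency; and the point identification you assume fails literally here because $\allprm$ is identified only up to permutation of components, so the identification hypothesis (and the linearization around $\prmt$) must be read modulo that label-switching symmetry --- precisely the degeneracy the paper's measure-valued reformulation in \problemref{gmp} is designed to quotient out.
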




\end{document}